  \providecommand\BibTeX{{%
    \normalfont B\kern-0.5em{\scshape i\kern-0.25em b}\kern-0.8em\TeX}}}
\begin{document}

\title{X-GOAL: Multiplex Heterogeneous Graph Prototypical Contrastive Learning}


\author{Baoyu Jing}
\email{baoyuj2@illinois.edu}
\affiliation{%
  \institution{University of Illinois at Urbana-Champaign}
    \country{}
}

\author{Shengyu Feng}
\email{shengyuf@andrew.cmu.edu}
\affiliation{%
  \institution{Language Technology Institute\\Carnegie Mellon University}
    \country{}
}

\author{Yuejia Xiang}
\email{yuejiaxiang@tencent.com}
\affiliation{%
  \institution{Platform and Content Group\\Tencent}
    \country{}
}

\author{Xi Chen}
\email{jasonxchen@tencent.com}
\affiliation{%
  \institution{Platform and Content Group\\Tencent}
    \country{}
}

\author{Yu Chen}
\email{andyyuchen@tencent.com}
\affiliation{%
  \institution{Platform and Content Group\\Tencent}
  \country{}
}

\author{Hanghang Tong}
\email{htong@illinois.edu}
\affiliation{%
  \institution{University of Illinois at Urbana-Champaign}
  \country{}
}

\renewcommand{\shortauthors}{Baoyu Jing et al.}

\begin{abstract}
Graphs are powerful representations for relations among objects, which have attracted plenty of attention in both academia and industry.
A fundamental challenge for graph learning is how to train an effective Graph Neural Network (GNN) encoder without labels, which are expensive and time consuming to obtain.
Contrastive Learning (CL) is one of the most popular paradigms to address this challenge, which trains GNNs by discriminating positive and negative node pairs.
Despite the success of recent CL methods, there are still two under-explored problems.
Firstly, how to reduce the semantic error introduced by random topology based data augmentations.
Traditional CL defines positive and negative node pairs via the node-level topological proximity, which is solely based on the graph topology regardless of the semantic information of node attributes, and thus some semantically similar nodes could be wrongly treated as negative pairs.
Secondly, how to effectively model the multiplexity of the real-world graphs, where nodes are connected by various relations and each relation could form a homogeneous graph layer.
To solve these problems, we propose a novel multiple\underline{x} heterogeneous \underline{g}raph pr\underline{o}totypical contr\underline{a}stive \underline{l}eaning (X-GOAL) framework to extract node embeddings.
X-GOAL is comprised of two components: the GOAL framework, which learns node embeddings for each homogeneous graph layer, and an alignment regularization, which jointly models different layers by aligning layer-specific node embeddings.
Specifically, the GOAL framework captures the node-level information by a succinct graph transformation technique, and captures the cluster-level information by pulling nodes within the same semantic cluster closer in the embedding space.
The alignment regularization aligns embeddings across layers at both node level and cluster level. 
We evaluate the proposed X-GOAL on a variety of real-world datasets and downstream tasks to demonstrate the effectiveness of the X-GOAL framework.

\end{abstract}

\begin{CCSXML}
<ccs2012>
<concept>
<concept_id>10002951.10003317</concept_id>
<concept_desc>Information systems~Information retrieval</concept_desc>
<concept_significance>500</concept_significance>
</concept>
<concept>
<concept_id>10002951.10003227.10003351</concept_id>
<concept_desc>Information systems~Data mining</concept_desc>
<concept_significance>500</concept_significance>
</concept>
<concept>
<concept_id>10003033</concept_id>
<concept_desc>Networks</concept_desc>
<concept_significance>500</concept_significance>
</concept>
<concept>
<concept_id>10010147.10010257.10010258.10010260</concept_id>
<concept_desc>Computing methodologies~Unsupervised learning</concept_desc>
<concept_significance>500</concept_significance>
</concept>
</ccs2012>
\end{CCSXML}

\ccsdesc[500]{Information systems~Data mining}
\ccsdesc[500]{Computing methodologies~Unsupervised learning}
\ccsdesc[500]{Networks}

\keywords{Prototypical Contrastive Learning, Multiplex Heterogeneous Graphs}

\maketitle

\section{Introduction}
Graphs are powerful representations of formalisms and have been widely used to model relations among various objects \cite{hamilton2017representation, kipf2016semi, tang2015line, yan2021dynamic, zhou2019misc, zhou2020data, yan2021bright}, such as the citation relation and the same-author relation among papers.
One of the primary challenges for graph representation learning is how to effectively encode nodes into informative embeddings such that they can be easily used in downstream tasks for extracting useful knowledge \cite{hamilton2017representation}.
Traditional methods, such as Graph Convolutional Network (GCN) \cite{kipf2016semi}, leverage human labels to train the graph encoders. 
However, human labeling is usually time-consuming and expensive, and the labels might be unavailable in practice \cite{wu2021self, liu2021graph, zheng2021heterogeneous, zheng2021tackling, du2021hypergraph}.
Self-supervised learning \cite{wu2021self, liu2021graph}, which aims to train graph encoders without external labels, has thus attracted plenty of attention in both academia and industry.

One of the predominant self-supervised learning paradigms in recent years is Contrastive Learning (CL), which aims to learn an effective Graph Neural Network (GNN) encoder such that positive node pairs will be pulled together and negative node pairs will be pushed apart in the embedding space \cite{wu2021self}.
Early methods, such as DeepWalk \cite{perozzi2014deepwalk} and node2vec \cite{grover2016node2vec}, sample positive node pairs based on their local proximity in graphs.
Recent methods rely on graph transformation or augmentation \cite{wu2021self} to generate positive pairs and negative pairs, such as random permutation \cite{velivckovic2018deep, hu2019strategies, jing2021hdmi}, structure based augmentation \cite{hassani2020contrastive, you2020graph}, sampling based augmentation \cite{qiu2020gcc, 9338425} as well as adaptive augmentation \cite{zhu2021graph}.

Albeit the success of these methods, they define positive and negative node pairs based upon the node-level information (or local topological proximity) but have not fully explored the cluster-level (or semantic cluster/prototype) information.
For example, in an academic graph, two papers about different sub-areas in graph learning (e.g., social network analysis and drug discovery) might not topologically close to each other since they do not have a direct citation relation or same-author relation.
Without considering their semantic information such as the keywords and topics, these two papers could be treated as a negative pair by most of the existing methods.
Such a practice will inevitably induce semantic errors to node embeddings, which will have a negative impact on the performance of machine learning models on downstream tasks such as classification and clustering.
To address this problem, inspired by \cite{li2020prototypical}, we introduce a \underline{g}raph pr\underline{o}totypical contr\underline{a}stive \underline{l}earning (GOAL) framework to simultaneously capture both node-level and cluster-level information.
At the node level, GOAL trains an encoder by distinguishing positive and negative node pairs, which are sampled by a succinct graph transformation technique.
At the cluster level, GOAL employs a clustering algorithm to obtain the semantic clusters/prototypes and it pulls nodes within the same cluster closer to each other in the embedding space.

Furthermore, most of the aforementioned methods ignore the multiplexity \cite{park2020unsupervised,jing2021hdmi} of the real-world graphs, where nodes are connected by multiple types of relations and each relation formulates a layer of the multiplex heterogeneous graph.
For example, in an academic graph, papers are connected via the same authors or the citation relation; 
in an entertainment graph, movies are linked through the shared directors or actors/actresses; 
in a product graph, items have relations such as also-bought and also-view.
Different layers could convey different and complementary information.
Thus jointly considering them could produce more informative embeddings than separately treating different layers and then applying average pooling over them to obtain the final embeddings \cite{jing2021hdmi, park2020unsupervised}.
Most of the prior deep learning methods use attention mechanism \cite{park2020deep, jing2021hdmi, wang2019heterogeneous, cen2019representation, ma2018multi, ma2019multi} to combine embeddings from different layers.
However, attention modules usually require extra tasks or loss functions to train, such as node classification \cite{wang2019heterogeneous} and concensus loss \cite{park2020deep}.
Besides, some attention modules are complex which require significant amount of extra efforts to design and tune, such as the hierarchical structures \cite{wang2019heterogeneous} and complex within-layer and cross-layer interactions \cite{ma2019multi}.
Different from the prior methods, we propose an alternative nimble alignment regularization to jointly model and propagate information across different layers by aligning the layer-specific embeddings without extra neural network modules, and the final node embeddings are obtained by simply average pooling over these layer-specific embeddings.
The key assumption of the alignment regularization is that layer-specific embeddings of the same node should be close to each other in the embedding space and they should also be semantically similar.
We also theoretically prove that the proposed alignment regularization could effectively maximize the mutual information across layers.

We comprehensively evaluate X-GOAL on a variety of real-world attributed multiplex heterogeneous graphs. 
The experimental results show that the embeddings learned by GOAL and X-GOAL could outperform state-of-the-art methods of homogeneous graphs and multiplex heterogeneous graphs on various downstream tasks.

The main contributions are summarized as follows: 
\begin{itemize}
    \item \textbf{Method.} We propose a novel X-GOAL framework to learn node embeddings for multiplex heterogeneous graphs, which is comprised of a GOAL framework for each single layer and an alignment regularization to propagate information across different layers. GOAL reduces semantic errors, and the alignment regularization is nimbler than attention modules for combining layer-specific node embeddings.
    \item \textbf{Theoretical Analysis.} We theoretically prove that the proposed alignment regularization can effectively maximize the mutual information across layers.
    \item \textbf{Empirical Evaluation.} We comprehensively evaluate the proposed methods on various real-world datasets and downstream tasks. The experimental results show that GOAL and X-GOAL outperform the state-of-the-art methods for homogeneous and multiplex heterogeneous graphs respectively.
\end{itemize}


\section{Preliminary}\label{sec:preliminary}


\begin{figure}[t!]
    \centering
    \includegraphics[width=.35\textwidth]{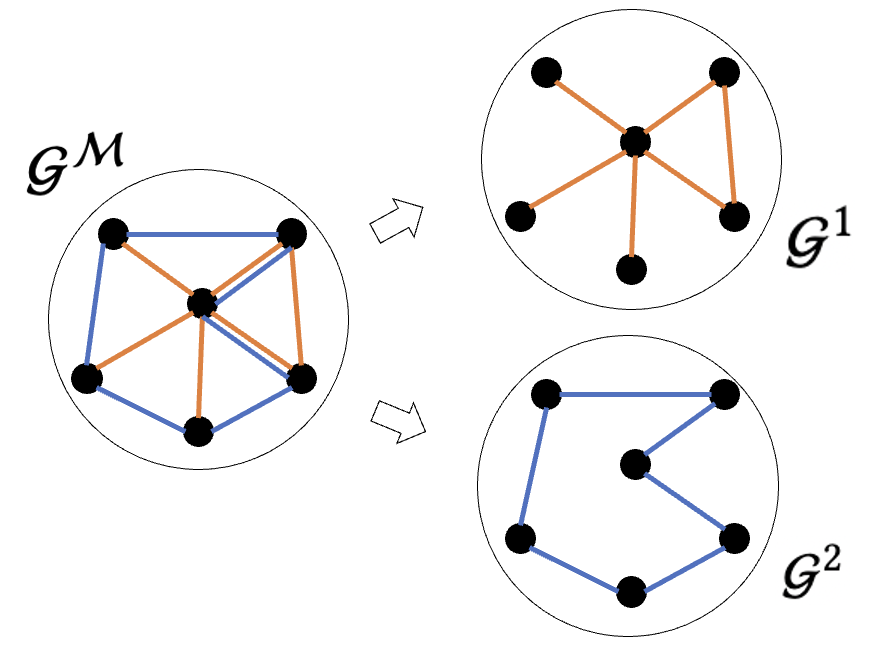}
    \caption{Illustration of the multiplex heterogeneous graph $\mathcal{G^M}$, which can be decomposed into homogeneous graph layers $\mathcal{G}^1$ and $\mathcal{G}^2$ according to the types of relations. Different colors represent different relations.}
    \label{fig:multiplex}
\end{figure}

\begin{definition}[Attributed Multiplex Heterogeneous Graph]
An attributed multiplex heterogeneous graph with $V$ layers and $N$ nodes is denoted as
$\mathcal{G^M}=\{\mathcal{G}^v\}_{v=1}^V$, where $\mathcal{G}^v{(\mathbf{A}^v, \mathbf{X})}$ is the $v$-th homogeneous graph layer,  $\mathbf{A}^v\in\mathbb{R}^{N\times N}$ and $\mathbf{X}\in\mathbb{R}^{N\times d_x}$ is the adjacency matrix and the attribute matrix, and $d_x$ is the dimension of attributes.
An illustration is shown in Figure \ref{fig:multiplex}.
\end{definition}

\noindent\textbf{Problem Statement.}
The task is to learn an encoder $\mathcal{E}$ for $\mathcal{G^M}$, which maps the node attribute matrix $\mathbf{X}\in\mathbb{R}^{N\times d_x}$ to node embedding matrix $\mathbf{H}^{\mathcal{M}}\in\mathbb{R}^{N\times d}$ without external labels, where $N$ is the number of nodes, $d_x$ and $d$ are the dimension sizes. 

\section{Methodology}\label{sec:method}
We present the X-GOAL framework for multiplex heterogeneous graphs $\mathcal{G^M}$, which is comprised of a GOAL framework and an alignment regularization. 
In Section \ref{subsec:pcl}, we present the GOAL framework, which simultaneously captures the node-level and the cluster-level information for each layer $\mathcal{G}=(\mathbf{A}, \mathbf{X})$ of $\mathcal{G^M}$.
In Section \ref{subsec:mvp}, we introduce a novel alignment regularization to align node embeddings across layers at both node and cluster level.
In section \ref{subsec:theory}, we provide theoretical analysis of the alignment regularization.

\subsection{The GOAL Framework}\label{subsec:pcl}
The node-level graph topology based transformation techniques might contain semantic errors since they ignore the hidden semantics and will inevitably pair two semantically similar but topologically far nodes as a negative pair.
To solve this issue, we introduce a GOAL framework for each homogeneous graph layer\footnote{For clarity, we drop the script $v$ of $\mathcal{G}^v$, $\mathbf{A}^v$ and $\mathbf{H}^v$ for this subsection.} $\mathcal{G}=(\mathbf{A}, \mathbf{X})$ to capture both node-level and cluster-level information.
An illustration of GOAL is shown in Figure \ref{fig:pcl}. 
Given a homogeneous graph $\mathcal{G}$ and an encoder $\mathcal{E}$, GOAL alternatively performs semantic clustering and parameter updating.
In the semantic clustering step, a clustering algorithm $\mathcal{C}$ is applied over the embeddings $\mathbf{H}$ to obtain the hidden semantic clusters.
In the parameter updating step, GOAL updates the parameters of $\mathcal{E}$ by the loss $\mathcal{L}$ given in Equation \eqref{eq:pcl}, which pulls topologically similar nodes closer and nodes within the same semantic cluster closer by the node-level loss and the cluster-level loss respectively. 

\noindent\textbf{A - Node-Level Loss.}\label{subsec:pcl_node}
To capture the node-level information, we propose a graph transformation technique $\mathcal{T} = \{\mathcal{T}^+, \mathcal{T}^-\}$, where $\mathcal{T}^+$ and $\mathcal{T}^-$ denote positive and negative transformations, along with a contrastive loss similar to InfoNCE \cite{oord2018representation}.

Given an original homogeneous graph $\mathcal{G}=(\mathbf{A}, \mathbf{X})$, the positive transformation $\mathcal{T}^+$ applies the dropout operation \cite{srivastava2014dropout} over $\mathbf{A}$ and $\mathbf{X}$ with a pre-defined probability $p_{drop}\in(0, 1)$.
We choose the dropout operation rather than the masking operation since the dropout re-scales the outputs by $\frac{1}{1-p_{drop}}$ during training, which improves the training results.
The negative transformation $\mathcal{T}^-$ is the random shuffle of the rows for $\mathbf{X}$ \cite{velivckovic2018deep}.
The transformed positive and negative graphs are denoted by $\mathcal{G}^{+}=\mathcal{T}^+(\mathcal{G})$ and $\mathcal{G}^{-}=\mathcal{T}^-(\mathcal{G})$, respectively.
The node embedding matrices of $\mathcal{G}$, $\mathcal{G}^+$ and $\mathcal{G}^-$ are thus $\mathbf{H}=\mathcal{E}(\mathcal{G})$, $\mathbf{H}^+=\mathcal{E}(\mathcal{G}^+)$ and $\mathbf{H}^-=\mathcal{E}(\mathcal{G}^-)$.

We define the node-level contrastive loss as:
\begin{equation}\label{eq:node_loss}
    \mathcal{L}_\mathcal{N} = -\frac{1}{N}\sum_{n=1}^N\log\frac{\mathbf{e}^{cos(\mathbf{h}_n, \mathbf{h}_n^+)}}{\mathbf{e}^{cos(\mathbf{h}_n, \mathbf{h}_n^+)} + \mathbf{e}^{cos(\mathbf{h}_n, \mathbf{h}_n^-)}}
\end{equation}
where $cos(,)$ denotes the cosine similarity, $\mathbf{h}_n$, $\mathbf{h}_n^+$ and $\mathbf{h}_n^-$ are the $n$-th rows of $\mathbf{H}$, $\mathbf{H}^+$ and $\mathbf{H}^-$.

\noindent\textbf{B - Cluster-Level Loss.}\label{subsec:pcl_semantic}
We use a clustering algorithm $\mathcal{C}$ to obtain the semantic clusters of nodes $\{\mathbf{c}_k\}_{k=1}^K$, where $\mathbf{c}_k\in\mathbb{R}^{d}$ is the cluster center, $K$ and $d$ are the number of clusters and the dimension of embedding space.
We capture the cluster-level semantic information to reduce the semantic errors by pulling nodes within the same cluster closer to their assigned cluster center.
For clarity, the derivations of the cluster-level loss are provided in Appendix.

We define the probability of $\mathbf{h}_n$ belongs to the cluster $k$ by: 
\begin{equation}\label{eq:prob}
    p(k|\mathbf{h}_n) = \frac{\mathbf{e}^{(\mathbf{c}_{k}^T\cdot\mathbf{h}_n/\tau)}}{\sum_{k'=1}^K\mathbf{e}^{(\mathbf{c}_{k'}^T\cdot\mathbf{h}_n/\tau)}}
\end{equation}
where $\tau >0$ is the temperature parameter to re-scale the values.

The cluster-level loss is defined as the negative log-likelihood of the assigned cluster $k_n$ for $\mathbf{h}_n$:
\begin{equation}\label{eq:semantic_loss}
    \mathcal{L}_\mathcal{C} = -\frac{1}{N}\sum_{n=1}^N\log \frac{\mathbf{e}^{(\mathbf{c}_{k_n}^T\cdot\mathbf{h}_n/\tau)}}{\sum_{k=1}^K\mathbf{e}^{(\mathbf{c}_{k}^T\cdot\mathbf{h}_n/\tau)}}
\end{equation}
where $k_n\in[1, \dots, K]$ is the cluster index assigned to the $n$-th node.

\begin{figure}
    \centering
    \includegraphics[width=0.4\textwidth]{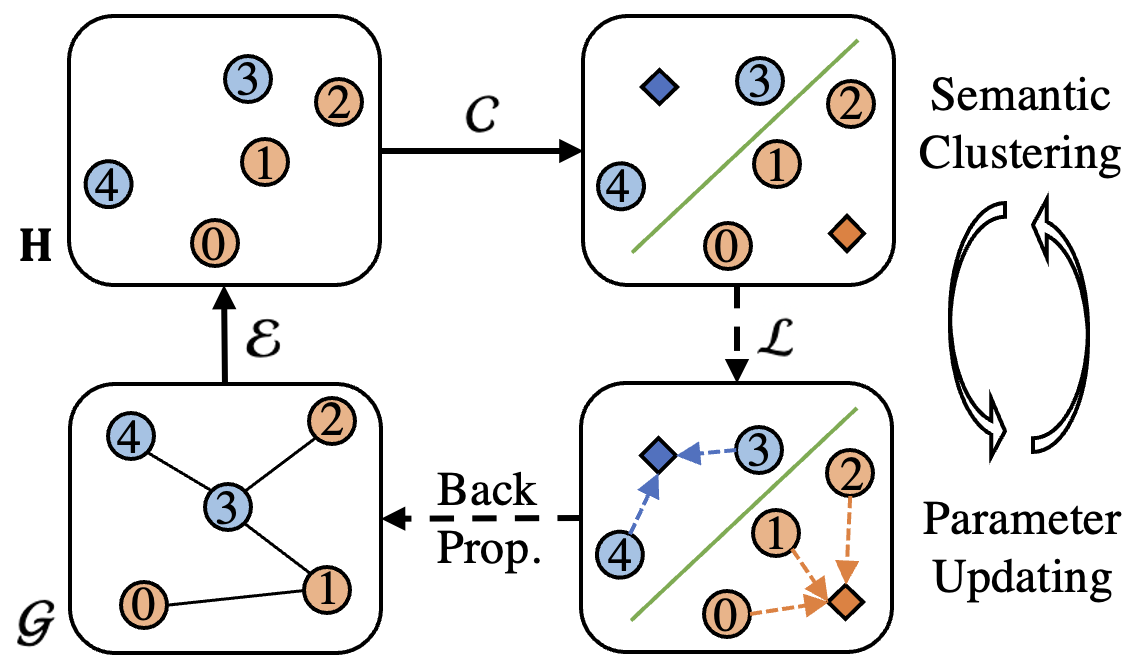}
    \caption{Illustration of GOAL. $\mathcal{E}$ and $\mathcal{C}$ are the encoder and clustering algorithm.
    $\mathcal{G}$ is a homogeneous graph layer and $\mathbf{H}$ is the embedding matrix. 
    $\mathcal{L}$ is given in Equation \eqref{eq:pcl}.
    The circles and diamonds denote nodes and cluster centers.
    Blue and orange denote different hidden semantics.
    The green line is the cluster boundary.
    ``Back Prop.'' means back propagation. 
    The node-level topology based negative sampling treats the semantic similar node 0 and 2 as a negative pair.
    The cluster-level loss reduces semantic error by pulling node 0 and 2 closer to their cluster center.
    }\label{fig:pcl}
\end{figure}

\noindent\textbf{C - Overall Loss.}
Combing the node-level loss in Equation \eqref{eq:node_loss} and the cluster-level loss in Equation \eqref{eq:semantic_loss}, we have:
\begin{equation}\label{eq:pcl}
    \mathcal{L} = \lambda_\mathcal{N}\mathcal{L}_\mathcal{N} + \lambda_\mathcal{C}\mathcal{L}_\mathcal{C}
\end{equation}
where $\lambda_\mathcal{N}$ and $\lambda_\mathcal{C}$ are tunable hyper-parameters.

\subsection{Alignment Regularization}\label{subsec:mvp}
Real-world graphs are often multiplex in nature, which can be decomposed into multiple homogeneous graph layers $\mathcal{G^M}=\{\mathcal{G}^v\}_{v=1}^V$.
The simplest way to extract the embedding of a node $\mathbf{x}_n$ in $\mathcal{G^M}$ is separately extracting the embedding $\{\mathbf{h}_n^v\}_{v=1}^V$ from different layers and then combing them via average pooling.
However, it has been empirically proven that jointly modeling different layers could usually produce better embeddings for downstream tasks \cite{jing2021hdmi}.
Most prior studies use attention modules to jointly learn embeddings from different layers, which are clumsy as they usually require extra efforts to design and train \cite{wang2019heterogeneous,park2020unsupervised,ma2018multi,jing2021hdmi}.
Alternatively, we propose a nimble alignment regularization to jointly learn embeddings by aligning the layer-specific $\{\mathbf{h}_n^v\}_{v=1}^V$ without introducing extra neural network modules, and the final node embedding of $\mathbf{x}_n$ is obtained by simply averaging the layer-specific embeddings $\mathbf{h}_n^\mathcal{M}=\frac{1}{V}\sum_{v=1}^V\mathbf{h}_n^v$.
The underlying assumption of the alignment is that $\mathbf{h}_n^v$ should be close to and reflect the semantics of $\{\mathbf{h}_n^{v'}\}_{v'\neq v}^V$.
The proposed alignment regularization is comprised of both node-level and cluster-level alignments.

Given $\mathcal{G^M}=\{\mathcal{G}^v\}_{v=1}^V$ with encoders $\{\mathcal{E}^v\}_{v=1}^V$, we first apply GOAL to each layer $\mathcal{G}^v$ and obtain the original and negative node embeddings $\{\mathbf{H}^{v}\}_{v=1}^V$ and $\{\mathbf{H}^{v-}\}_{v=1}^V$, as well as the cluster centers $\{\mathbf{C}^v\}_{v=1}^V$, where $\mathbf{C}^v\in\mathbb{R}^{K^v\times d}$ is the concatenation of the cluster centers for the $v$-th layer, $K^v$ is the number of clusters for the $v$-the layer.
The node-level alignment is applied over $\{\mathbf{H}^{v}\}_{v=1}^V$ and $\{\mathbf{H}^{v-}\}_{v=1}^V$.
The cluster-level alignment is used on $\{\mathbf{C}^v\}_{v=1}^V$ and $\{\mathbf{H}^{v}\}_{v=1}^V$.

\noindent\textbf{A - Node-Level Alignment.}\label{subsec:align_node}
For a node $\mathbf{x}_n$, its embedding $\mathbf{h}_n^{v}$ should be close to embeddings $\{\mathbf{h}_n^{v'}\}_{v'\neq v}^V$ and far away from the negative embedding $\mathbf{h}_n^{v-}$.
Analogous to Equation \eqref{eq:node_loss}, we define the node-level alignment regularization as:
\begin{equation}\label{eq:node_align}
    \mathcal{R}_\mathcal{N} = -\frac{1}{Z}\sum_{n=1}^N\sum_{v=1}^V\sum_{v'\neq v}^V\log\frac{\mathbf{e}^{cos(\mathbf{h}_n^v, \mathbf{h}_n^{v'})}}{\mathbf{e}^{cos(\mathbf{h}_n^v, \mathbf{h}_n^{v'})} + \mathbf{e}^{cos(\mathbf{h}_n^v, \mathbf{h}_n^{v-})}}
\end{equation}
where $Z=NV(V-1)$ is the normalization factor.

\begin{figure}
    \centering
    \includegraphics[width=0.35\textwidth]{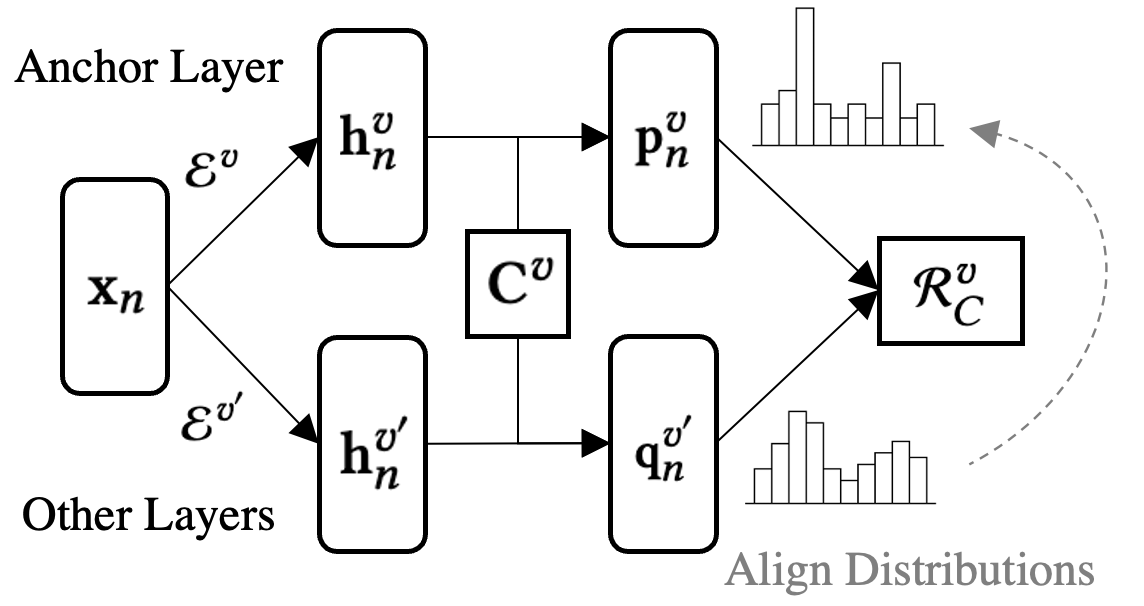}
    \caption{Cluster-level alignment. 
    $\mathbf{x}_n$ is the node attribute.
    $\mathbf{h}_n^v$ and $\mathbf{h}_n^{v'}$ are the layer-specific embeddings.
    $\mathbf{C}^v$ is the anchor cluster center matrix.
    $\mathbf{p}_n^v$ and $\mathbf{q}_n^{v'}$ are the anchor and recovered semantic distributions. $\mathcal{R}_{\mathcal{C}}^v$ is given in Equation \eqref{eq:align_semantic_loss_v}.}\label{fig:semantic_align}
\end{figure}

\noindent\textbf{B - Cluster-Level Alignment.}\label{subsec:align_semantic}
Similar to the node-level loss in Equation \eqref{eq:node_loss}, the node-level alignment in Equation \eqref{eq:node_align} could also introduce semantic errors since $\mathbf{h}_n^{v-}$ might be topologically far from but semantically similar to $\mathbf{h}_n^{v}$.
To reduce the semantic error, we also align the layer-specific embeddings $\{\mathbf{h}_n^v\}_{v=1}^V$ at the cluster level.

Let the $v$-th layer be the anchor layer and its semantic cluster centers $\mathbf{C}^v\in\mathbb{R}^{K^v\times d}$ as the anchor cluster centers.
For a node $\mathbf{x}_n$, we call its layer-specific embedding $\mathbf{h}_n^v$ as the anchor embedding, and its semantic distribution $\mathbf{p}_n^v\in\mathbb{R}^{K^v}$ as the anchor semantics, which is obtained via Equation \eqref{eq:prob} based on $\mathbf{h}_n^v$ and $\mathbf{C}^v$. 
Our key idea of the cluster-level alignment is to recover the anchor semantics $\mathbf{p}_n^v$ from embeddings $\{\mathbf{h}_n^{v'}\}_{v'\neq v}^V$ of other layers based on $\mathbf{C}^v$.

Our idea can be justified from two perspectives.
Firstly, $\{\mathbf{h}_n^{v}\}_{v=1}^V$ reflect information of $\mathbf{x}_n$ from different aspects, if we can recover the anchor semantics $\mathbf{p}^v_n$ from the embedding $\mathbf{h}^{v'}_n$ of another layer $v'\neq v$, then it indicates that $\mathbf{h}^{v}_n$ and $\mathbf{h}^{v'}_n$ share hidden semantics to a certain degree.
Secondly, it is impractical to directly align $\mathbf{p}_n^v$ and $\mathbf{p}_n^{v'}$, since their dimensions might be different $K^v\neq K^{v'}$, and even if $K^v = K^{v'}$, the cluster center vectors $\mathbf{C}^v$ and $\mathbf{C}^{v'}$ are distributed at different positions in the embedding space.




An illustration of the cluster-level alignment is presented in Figure \ref{fig:semantic_align}. 
Given a node $\mathbf{x}_n$, on the anchor layer $v$, we have the anchor cluster centers $\mathbf{C}^v$, the anchor embedding $\mathbf{h}_n^v$, and the anchor semantic distribution $\mathbf{p}_n^v$.
Next, we use the embedding $\mathbf{h}_n^{v'}$ from the layer $v'\neq v$ to obtain the recovered semantic distribution $\mathbf{q}_n^{v'}$ based on $\mathbf{C}^v$ via Equation \eqref{eq:prob}.
Then we align the semantics of $\mathbf{h}_n^{v}$ and $\mathbf{h}_n^{v'}$ by minimizing the KL-divergence of $\mathbf{p}_n^v$ and $\mathbf{q}_n^{v'}$:
\begin{equation}\label{eq:align_semantic_loss_v}
    \mathcal{R}_C^v = \frac{1}{N(V-1)}\sum_{n=1}^N\sum_{v'\neq v}^V KL(\mathbf{p}^v_n||\mathbf{q}^{v'}_n)
\end{equation}
where $\mathbf{p}_n^v$ is treated as the ground-truth and the gradients are not allowed to pass through $\mathbf{p}_n^v$ during training. 



Finally, we alternatively use all $V$ layers as anchor layers and use the averaged KL-divergence as the final semantic regularization:
\begin{equation}\label{eq:align_semantic_loss}
    \mathcal{R}_C = \frac{1}{V}\sum_{v=1}^V\mathcal{R}_C^v
\end{equation}

\noindent\textbf{C - Overall Loss.}
By combining the node-level and cluster-level regularization losses, we have:
\begin{equation}\label{eq:reconstruction}
    \mathcal{R} = \mu_\mathcal{N}\mathcal{R_N} + \mu_\mathcal{C}\mathcal{R_C}
\end{equation}
where $\mu_\mathcal{N}$ and $\mu_\mathcal{C}$ are tunable hyper-parameters.

The final training objective of the X-GOAL framework is the combination of the contrastive loss $\mathcal{L}$ in Equation \eqref{eq:pcl} and the alignment regularization $\mathcal{R}$ in Equation \eqref{eq:reconstruction}: 
\begin{equation}
    \mathcal{L}_{X} = \sum_{v=1}^V\mathcal{L}^v + \mathcal{R}
\end{equation}
where $\mathcal{L}^v$ is the loss of layer $v$



\begin{table*}[t!]
    \small
    \centering
    \caption{Statistics of the datasets}
    \begin{tabular}{c|c|c|c|c|c|c}
    \hline
    Graphs &  \# Nodes & Layers & \# Edges & \# Attributes & \# Labeled Data & \# Classes\\
    \hline
    \multirow{2}{*}{ACM} & \multirow{2}{*}{3,025} & Paper-Subject-Paper (PSP) & 2,210,761 & 1,830 & \multirow{2}{*}{600} & \multirow{2}{*}{3}\\
    & & Paper-Author-Paper (PAP) & 29,281 & (Paper Abstract) & & \\
    \hline
    \multirow{2}{*}{IMDB} & \multirow{2}{*}{3,550} & Movie-Actor-Movie (MAM) & 66,428 & 1,007 & \multirow{2}{*}{300} & \multirow{2}{*}{3}\\
    & & Movie-Director-Movie (MDM) & 13,788 & (Movie plot) & \\
    \hline
    \multirow{3}{*}{DBLP} & \multirow{3}{*}{7,907} & Paper-Author-Paper (PAP) & 144,783 & \multirow{3}{*}{\shortstack{2,000 \\ (Paper Abstract)}} & \multirow{3}{*}{80} & \multirow{3}{*}{4}\\
    & & Paper-Paper-Paper (PPP) & 90,145 & & \\
    & & Paper-Author-Term-Author-Paper (PATAP) & 57,137,515 &  & \\
    \hline
    \multirow{3}{*}{Amazon} & \multirow{3}{*}{7,621} & Item-AlsoView-Item (IVI) & 266,237 & \multirow{3}{*}{\shortstack{2,000 \\ (Item description)}} & \multirow{3}{*}{80} & \multirow{3}{*}{4}\\
    & & Item-AlsoBought-Item (IBI) & 1,104,257 & & \\
    & & Item-BoughtTogether-Item (IOI) & 16,305 &  & \\
    \hline
    \end{tabular}
    \label{tab:datasets}
\end{table*}

\subsection{Theoretical Analysis}\label{subsec:theory}
We provide theoretical analysis for the proposed regularization alignments.
In Theorem \ref{theory_1}, we prove that the node-level alignment maximizes the mutual information of embeddings  $H^{v}\in\{\mathbf{h}_n^{v}\}_{n=1}^N$ of the anchor layer $v$ and embeddings $H^{v'}\in\{\mathbf{h}_n^{v'}\}_{n=1}^N$ of another layer $v'$. 
In Theorem \ref{theory_2}, we prove that the cluster-level alignment maximizes the mutual information of semantic cluster assignments $C^v\in[1, \cdots, K^v]$ for embeddings $\{\mathbf{h}_n^v\}_{n=1}^N$ of the anchor layer $v$ and embeddings $H^{v'}\in\{\mathbf{h}_n^{v'}\}_{n=1}^N$ of the layer $v'$.

\begin{theorem}[Maximization of MI of Embeddings from Different Layers]\label{theory_1}
Let $H^{v}\in\{\mathbf{h}_n^{v}\}_{n=1}^N$ and $H^{v'}\in\{\mathbf{h}_n^{v'}\}_{n=1}^N$ be the random variables for node embeddings of the $v$-th and $v'$-th layers, then the node-level alignment maximizes $I(H^v; H^{v'})$.
\end{theorem}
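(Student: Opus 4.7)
The plan is to recognize $\mathcal{R}_{\mathcal{N}}$ as a binary ($K=2$) version of the InfoNCE loss \cite{oord2018representation} and to invoke the corresponding variational lower bound on mutual information. In each per-node summand, the pair $(\mathbf{h}_n^v, \mathbf{h}_n^{v'})$ plays the role of a positive draw from the joint distribution $p(H^v, H^{v'})$ indexed by the node $n$, while the negative $\mathbf{h}_n^{v-}$, produced by the row-shuffling transformation $\mathcal{T}^-$, plays the role of an independent draw from the corresponding embedding marginal. The natural critic is $f(x,y) = e^{\cos(x,y)}$.

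First I would rewrite each per-node term by adding and subtracting $\log 2$ as
\[
\log 2 \;-\; \log \frac{f(\mathbf{h}_n^v, \mathbf{h}_n^{v'})}{\frac{1}{2}\bigl[f(\mathbf{h}_n^v, \mathbf{h}_n^{v'}) + f(\mathbf{h}_n^v, \mathbf{h}_n^{v-})\bigr]},
\]
which exposes the $K=2$ InfoNCE form. Next I would apply the standard derivation (Jensen's inequality together with the fact that the optimal critic satisfies $f^{\star}(x,y) \propto p(y\mid x)/p(y)$) to obtain
\[
I(H^v; H^{v'}) \;\geq\; \log 2 \;-\; \mathbb{E}\!\left[-\log \frac{f(\mathbf{h}_n^v, \mathbf{h}_n^{v'})}{f(\mathbf{h}_n^v, \mathbf{h}_n^{v'}) + f(\mathbf{h}_n^v, \mathbf{h}_n^{v-})}\right].
\]
Summing over $n$ and over ordered layer pairs $(v,v')$ with the normalization $Z = NV(V-1)$ yields $\log 2 - \mathcal{R}_{\mathcal{N}} \leq I(H^v; H^{v'})$ (symmetrized across pairs), so minimizing $\mathcal{R}_{\mathcal{N}}$ tightens this lower bound and hence maximizes the mutual information, as claimed.

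The main obstacle, and the place where care is needed, is reconciling the fact that $\mathbf{h}_n^{v-}$ is nominally produced by $\mathcal{E}^v$ on shuffled attributes, yet the bound requires it to be a draw from the marginal of $H^{v'}$ on the second coordinate. I would resolve this in one of two ways: (i) regard $f(\mathbf{h}^v,\cdot)$ as a similarity functional on the shared $\mathbb{R}^d$ embedding space, so that the shuffled sample is a valid marginal draw in this common space; or (ii) appeal precisely to the modelling assumption that motivates the alignment regularization itself, namely that the layer-specific encoders are trained so as to produce comparable embedding distributions, allowing $\mathbf{h}_n^{v-}$ to stand in for a marginal sample from layer $v'$. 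Once this identification is made, the rest of the derivation is a direct specialization of the InfoNCE-MI bound to the binary case and should not introduce further subtleties.
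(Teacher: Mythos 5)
Your proof takes essentially the same route as the paper: it invokes the multi-sample InfoNCE variational lower bound on mutual information (from Poole et al.\ / van den Oord et al.), specializes it to a single anchor and two candidates ($K_1=1$, $K_2=2$), with $f=\cos$, the positive drawn from the joint $(\mathbf{h}_n^v,\mathbf{h}_n^{v'})$ and the negative $\mathbf{h}_n^{v-}$ serving as the marginal sample, and then identifies the resulting expectation (averaged over nodes and ordered layer pairs with $Z=NV(V-1)$) with $-\mathcal{R}_{\mathcal{N}}$, so that minimizing $\mathcal{R}_{\mathcal{N}}$ tightens the bound. The only differences are that you are more careful: you retain the additive $\log 2$ constant (the paper silently absorbs the $\frac{1}{K_2}$ normalization, which is harmless since it is a constant and does not change the argmax), and you explicitly flag the genuine subtlety that $\mathbf{h}_n^{v-}$ is produced by $\mathcal{E}^v$ on shuffled attributes rather than being a true marginal draw of $H^{v'}$ --- a gap the paper does not acknowledge and which your resolutions (shared embedding space, or the modelling assumption that layer encoders produce comparable distributions) address at the same level of informality the paper itself operates at.
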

\begin{proof}
According to \cite{poole2019variational, oord2018representation}, the following inequality holds:
\begin{equation}
    I(X;Y) \geq \mathbb{E}[\frac{1}{K_1}\sum_{i=1}^{K_1}\log\frac{\mathbf{e}^{f(x_i, y_i)}}{\frac{1}{K_2}\sum_{j=1}^{K_2}\mathbf{e}^{f(x_i, y_j)}}]
\end{equation}
Let $K_1=1$, $K_2=2$, $f()=cos()$, $x_1=\mathbf{h}_n^v$, $y_1=\mathbf{h}_n^{v'}$, $y_2=\mathbf{h}_n^{-v}$, then: 
\begin{equation}
    I(H^v; H^{v'}) \geq \mathbb{E}[\log\frac{\mathbf{e}^{cos(\mathbf{h}_n^v, \mathbf{h}_n^{v'})}}{\mathbf{e}^{cos(\mathbf{h}_n^v, \mathbf{h}_n^{v'})} + \mathbf{e}^{cos(\mathbf{h}_n^v, \mathbf{h}_n^{v-})}}]
\end{equation}
The expectation $\mathbb{E}$ is taken over all the $N$ nodes, and all the pairs of $V$ layers, and thus we have:
\begin{equation}
    I(H^v; H^{v'}) \geq \frac{1}{Z}\sum_{n=1}^N\sum_{v=1}^V\sum_{v'\neq v}^V\log\frac{\mathbf{e}^{cos(\mathbf{h}_n^v, \mathbf{h}_n^{v'})}}{\mathbf{e}^{cos(\mathbf{h}_n^v, \mathbf{h}_n^{v'})} + \mathbf{e}^{cos(\mathbf{h}_n^v, \mathbf{h}_n^{v-})}}
\end{equation}
where $Z=NV(V-1)$ is the normalization factor, and the right side is $\mathcal{R_N}$ in Equation \eqref{eq:align_semantic_loss}.
\end{proof}

\begin{theorem}[Maximization of MI between Embeddings and Semantic Cluster Assignments]\label{theory_2}
Let $C^v\in[1, \cdots, K^v]$ be the random variable for cluster assignments for $\{\mathbf{h}^v_{n}\}_{n=1}^N$ of the anchor layer $v$, and $H^{v'}\in\{\mathbf{h}_n^{v'}\}_{n=1}^N$ be the random variable for node embeddings of the $v'$-th layer, then the cluster-level alignment maximizes the mutual information of $C^v$ and $H^{v'}$: $I(C^v; H^{v'})$.
\end{theorem}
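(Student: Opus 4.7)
The plan is to apply the Barber--Agakov variational lower bound on mutual information, using the recovered semantic distribution $\mathbf{q}_n^{v'}$ as the variational posterior. Writing $I(C^v; H^{v'}) = H(C^v) - H(C^v \mid H^{v'})$ and introducing an arbitrary conditional distribution $q(c \mid h)$, one has
\begin{equation}
I(C^v; H^{v'}) \;\geq\; H(C^v) \;+\; \mathbb{E}_{p(c,h)}\bigl[\log q(c \mid h)\bigr],
\end{equation}
since the gap equals $\mathbb{E}_{h}\bigl[KL(p(c\mid h)\,\|\,q(c\mid h))\bigr]\geq 0$. The first step is to identify the true posterior with the anchor distribution, i.e., treat $p(C^v=k \mid \mathbf{x}_n)$ as $\mathbf{p}_n^v(k)$ computed from $\mathbf{h}_n^v$ and $\mathbf{C}^v$ via Equation~\eqref{eq:prob}, and to plug in the variational choice $q(C^v=k \mid H^{v'}=\mathbf{h}_n^{v'}) = \mathbf{q}_n^{v'}(k)$, obtained from $\mathbf{h}_n^{v'}$ and the same anchor centers $\mathbf{C}^v$.

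Next, I would replace the population expectation by an empirical average over the $N$ nodes and rewrite the inner expectation as a cross-entropy, then split that cross-entropy into an entropy plus a KL:
\begin{equation}
I(C^v; H^{v'}) \;\geq\; H(C^v) \;-\; \frac{1}{N}\sum_{n=1}^N \bigl[H(\mathbf{p}_n^v) \;+\; KL(\mathbf{p}_n^v \,\|\, \mathbf{q}_n^{v'})\bigr].
\end{equation}
The key observation is that $H(C^v)$ is a marginal entropy independent of the parameters being optimized (for the $v'$-th layer), and $H(\mathbf{p}_n^v)$ is likewise constant because the paper explicitly stops gradients from flowing through $\mathbf{p}_n^v$ during training. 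Consequently, minimizing $\mathcal{R}_C^v$ from Equation~\eqref{eq:align_semantic_loss_v} is exactly equivalent to maximizing the right-hand-side lower bound on $I(C^v; H^{v'})$. Averaging over the choice of anchor layer as in Equation~\eqref{eq:align_semantic_loss} then yields the overall statement.

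The main obstacle is conceptual rather than technical: one must be careful that the bound is on $I(C^v; H^{v'})$ where $C^v$ denotes the (soft) cluster assignment governed by the anchor layer's encoder and centers, while $H^{v'}$ is driven by a different layer's encoder; the stop-gradient on $\mathbf{p}_n^v$ is precisely what makes the entropy terms play the role of constants so that the KL minimization corresponds to tightening a genuine lower bound on mutual information. A minor subtlety is that the bound is an expectation under the joint $(C^v, H^{v'})$, which I would handle by sampling $\mathbf{x}_n$ uniformly and invoking the cluster-conditional distribution induced by the anchor encoder; everything else reduces to manipulating the Gibbs form in Equation~\eqref{eq:prob} and standard information-theoretic identities.
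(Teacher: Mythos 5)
Your proposal is correct and follows essentially the same route as the paper: both arguments invoke a variational (Barber--Agakov-style) lower bound on $I(C^v;H^{v'})$, instantiate the variational posterior as the Gibbs form $\mathbf{q}_n^{v'}$ built from the anchor-layer centers $\mathbf{C}^v$, and then observe that the resulting expected log-likelihood is (up to the constant entropy $H(\mathbf{p}_n^v)$, which the stop-gradient freezes) the negative of the average cross-entropy $H(\mathbf{p}_n^v,\mathbf{q}_n^{v'})$, so minimizing the KL term in Equation~\eqref{eq:align_semantic_loss_v} tightens the lower bound on mutual information. The only cosmetic difference is that you retain the marginal entropy $H(C^v)$ in the bound and split the cross-entropy into entropy plus KL explicitly, whereas the paper cites a variant of the bound that omits $H(C^v)$ and works directly with cross-entropy; both are valid and lead to the same conclusion.
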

\begin{proof}
In the cluster-level alignment, the anchor distribution $\mathbf{p}_n^v$ is regarded as the ground-truth for the $n$-th node, and $\mathbf{q}_{n}^{v'}=f(\mathbf{h}_n^{v'})$ is the recovered distribution from the $v'$-th layer, where $f()$ is a $K^v$ dimensional function defined by Equation \eqref{eq:prob}.
Specifically,
\begin{equation}\label{eq:tmp}
    f(\mathbf{h}_n^{v'})[k]=p(k|\mathbf{h}_n^{v'})=\frac{\mathbf{e}^{(\mathbf{c}_{k}^T\cdot\mathbf{h}_n^{v'}/\tau)}}{\sum_{k'=1}^{K^v}\mathbf{e}^{(\mathbf{c}_{k'}^T\cdot\mathbf{h}_n^{v'}/\tau)}}
\end{equation}
where $\{\mathbf{c}_{k}\}_{k=1}^{K^v}$ is the set of cluster centers for the $v$-th layer.

Since $\mathbf{p}_n^v$ is the ground-truth, and thus its entropy $H(\mathbf{p}_n^v)$ is a constant.
As a result, the KL divergence in Equation \eqref{eq:align_semantic_loss_v} is equivalent to cross-entropy $H(\mathbf{p}_n^v,\mathbf{q}_{n}^{v'})=KL(\mathbf{p}_n^v||\mathbf{q}_{n}^{v'}) + H(\mathbf{p}_n^v)$.
Therefore, minimizing the KL-divergence will minimize $H(\mathbf{p}_n^v,\mathbf{q}_{n}^{v'})$.

On the other hand, according to \cite{mcallester2020formal, qin2019rethinking}, we have the following variational lower bound for $I(C^v;H^{v'})$:
\begin{equation}
    I(C^v;H^{v'})\geq\mathbb{E}[\log\frac{\mathbf{e}^{g(\mathbf{h}_n^{v'}, k)}}{\sum_{k'=1}^{K^v}\mathbf{e}^{g(\mathbf{h}_n^{v'}, k')}}]
\end{equation}
where $g()$ is any function of $\mathbf{h}_n^{v'}$ and $k$.

In our case, we let
\begin{equation}
    g(\mathbf{h}_n^{v'}, k) = \frac{1}{\tau}\mathbf{c}_{k}^T\cdot\mathbf{h}_n^{v'}
\end{equation}
where $\mathbf{c}_{k}$ is the $k$-th semantic cluster center of the $v$-th layer, and $\tau$ is the temperature parameter.

As a result, we have
\begin{equation}
    \frac{\mathbf{e}^{g(\mathbf{h}_n^{v'}, k)}}{\sum_{k'=1}^{K^v}\mathbf{e}^{g(\mathbf{h}_n^{v'}, k')}} = f[\mathbf{h}_n^{v'}][k] = \mathbf{q}_n^{v'}[k]
\end{equation}

The expectation $\mathbb{E}$ is taken over the ground-truth distribution of the cluster assignments for the anchor layer $v$:
\begin{equation}
    p_{gt}(\mathbf{h}_n^{v'}, k) = p_{gt}(\mathbf{h}_n^{v'})p_{gt}(k|\mathbf{h}_n^{v'}) = \frac{1}{N}\mathbf{p}_n^{v}[k]
\end{equation}
where $p_{gt}(k|\mathbf{h}_n^{v'})=\mathbf{p}_n^{v}[k]$ is the ground-truth semantic distribution for $\mathbf{h}_n^{v'}$ on the anchor layer $v$, which is different from the recovered distribution $p(k|\mathbf{h}_n^{v'})=\mathbf{q}_n^{v'}[k]$ shown in Equation \eqref{eq:tmp}.

Therefore, we have
\begin{equation}
    I(C^v;H^{v'})\geq\frac{1}{Z}\sum_{n=1}^N\sum_{k=1}^{K^v}\mathbf{p}_n^v[k]\log\mathbf{q}_n^{v'}[k]=-\frac{1}{Z}\sum_{n=1}^NH(\mathbf{p}_n^v,\mathbf{q}_{n}^{v'})
\end{equation}
where $Z=NK^v$ is the normalization factor.

Thus, minimizing $H(\mathbf{p}_n^v,\mathbf{q}_{n}^{v'})$ will maximize $I(C^v;H^{v'})$.
\end{proof}

\section{Experiments}\label{sec:experiments}

\subsection{Experimental Setups}

\noindent\textbf{Datasets.} 
We use publicly available multiplex heterogeneous graph datasets \cite{park2020unsupervised, jing2021hdmi}: ACM, IMDB, DBLP and Amazon to evaluate the proposed methods.
The statistics is summarized in Table \ref{tab:datasets}.

\begin{table*}[t]
    \small
    \centering
    \caption{Overall performance of X-GOAL on the supervised task: node classification.}
    \begin{tabular}{l|cc|cc|cc|cc}
         \hline
         Dataset & \multicolumn{2}{c|}{ACM} & \multicolumn{2}{c|}{IMDB} & \multicolumn{2}{c|}{DBLP} & \multicolumn{2}{c}{Amazon} \\
         \hline
         Metric & Macro-F1 & Micro-F1 & Macro-F1 & Micro-F1 & Macro-F1 & Micro-F1 & Macro-F1 & Micro-F1\\
         \hline
         DeepWalk & 0.739 & 0.748 & 0.532 & 0.550 & 0.533 & 0.537  & 0.663 & 0.671 \\
         node2vec & 0.741 & 0.749 & 0.533 & 0.550 & 0.543 & 0.547 & 0.662 & 0.669\\
         GCN/GAT & 0.869 & 0.870 & 0.603 & 0.611 & 0.734 & 0.717 & 0.646 & 0.649\\
         DGI & 0.881 & 0.881 & 0.598 & 0.606 & 0.723 & 0.720 & 0.403 & 0.418\\
         ANRL & 0.819 & 0.820 & 0.573 & 0.576 & 0.770 & 0.699 & 0.692 & 0.690\\
         CAN & 0.590 & 0.636 & 0.577 & 0.588 & 0.702 & 0.694 & 0.498 & 0.499\\
         DGCN & 0.888 & 0.888 & 0.582 & 0.592 & 0.707&  0.698 & 0.478 & 0.509\\
         GraphCL  & 0.884 &0.883 & 0.619 & 0.623 & 0.814 & 0.806& 0.461 & 0.472\\
         GCA & 0.798 & 0.797 & 0.523 & 0.533 & OOM & OOM & 0.408 & 0.398\\
         HDI & {0.901} & 0.900 & 0.634 & 0.638 & 0.814 & 0.800 & 0.804 & 0.806 \\
         \hline
         CMNA & 0.782 & 0.788 & 0.549 & 0.566 & 0.566 & 0.561 & 0.657 & 0.665\\
         MNE & 0.792 & 0.797 & 0.552 & 0.574 & 0.566 & 0.562 & 0.556 & 0.567\\
         mGCN & 0.858 & 0.860 & 0.623 & 0.630 & 0.725 & 0.713 & 0.660 & 0.661\\
         HAN & 0.878 & 0.879 & 0.599 & 0.607 & 0.716 & 0.708 & 0.501 & 0.509\\
         DMGI & 0.898 & 0.898 & 0.648 & 0.648 & 0.771 & 0.766 & 0.746 & 0.748\\
         DMGI$_{\text{attn}}$ & 0.887 & 0.887 & 0.602 & 0.606 & 0.778 & 0.770 & 0.758 & 0.758\\
         MvAGC & 0.778 & 0.791 & 0.598 & 0.615 & 0.509 & 0.542 & 0.395 & 0.414\\
         HDMI & {0.901} & {0.901} & {0.650} & {0.658} & {0.820} & {0.811} & {0.808} & {0.812}\\
         \hline
         X-GOAL & \textbf{0.922} & \textbf{0.921} & \textbf{0.661} & \textbf{0.663} & \textbf{0.830} & \textbf{0.819} & \textbf{0.858} & \textbf{0.857}\\
         \hline
    \end{tabular}
    \label{tab:overall_classification}
\end{table*}

\begin{table*}[t]
    \small
    \centering
    \caption{Overall performance of X-GOAL on the unsupervised tasks: node clustering and similarity search.
    }
    \begin{tabular}{l|cc|cc|cc|cc}
         \hline
         Dataset & \multicolumn{2}{c|}{ACM} & \multicolumn{2}{c|}{IMDB} & \multicolumn{2}{c|}{DBLP} & \multicolumn{2}{c}{Amazon} \\
         \hline
         Metric & NMI & Sim@5 & NMI & Sim@5 & NMI & Sim@5 & NMI & Sim@5\\
         \hline
         DeepWalk & 0.310 & 0.710 & 0.117 & 0.490 & 0.348 & 0.629  & 0.083 & 0.726 \\
         node2vec & 0.309 & 0.710 & 0.123 & 0.487 & 0.382 & 0.629 & 0.074 & 0.738\\
         GCN/GAT & 0.671 & 0.867 & 0.176 & 0.565 & 0.465 & 0.724 & 0.287 & 0.624\\
         DGI & 0.640 & 0.889 & 0.182 & 0.578 & 0.551 & 0.786 & 0.007 & 0.558\\
         ANRL & 0.515 & 0.814 & 0.163 & 0.527 & 0.332 & 0.720 & 0.166 & 0.763\\
         CAN & 0.504 & 0.836 & 0.074 & 0.544 & 0.323 & 0.792 & 0.001 & 0.537\\
         DGCN & 0.691 & 0.690 & 0.143 & 0.179 & 0.462 & 0.491 & 0.143 & 0.194\\
         GraphCL & 0.673 & 0.890 & 0.149 & 0.565 & 0.545 & 0.803 & 0.002&0.360\\
         GCA & 0.443 & 0.791 & 0.007 & 0.496 & OOM & OOM &0.002 & 0.478\\
         HDI & 0.650 & 0.900 & 0.194 & 0.605 & 0.570 & 0.799 & 0.487 & 0.856 \\
         \hline
         CMNA & 0.498 & 0.363 & 0.152 & 0.069 & 0.420 & 0.511 & 0.070 & 0.435\\
         MNE & 0.545 & 0.791 & 0.013 & 0.482 & 0.136 & 0.711 & 0.001 & 0.395\\
         mGCN & 0.668 & 0.873 & 0.183 & 0.550 & 0.468 & 0.726 & 0.301 & 0.630\\
         HAN & 0.658 & 0.872 & 0.164 & 0.561 & 0.472 & 0.779 & 0.029 & 0.495\\
         DMGI & 0.687 & 0.898 & 0.196 & 0.605 & 0.409 & 0.766 & 0.425 & 0.816\\
         DMGI$_{\text{attn}}$ & 0.702 & 0.901 & 0.185 & 0.586 & 0.554 & 0.798 & 0.412 & 0.825\\
         MvAGC & 0.665 & 0.824 & 0.219 & 0.525 & 0.281 & 0.437 &0.082 & 0.237\\
         HDMI & 0.695 & 0.898 & 0.198 & 0.607 & 0.582 & \textbf{0.809} & 0.500 & 0.857\\
         \hline
         X-GOAL & \textbf{0.773} & \textbf{0.924} & \textbf{0.221} & \textbf{0.613} & \textbf{0.615} & \textbf{0.809} & \textbf{0.556} & \textbf{0.907}\\
         \hline
    \end{tabular}
    \label{tab:overall_clustering}
\end{table*}

\noindent\textbf{Comparison Methods.}
We compare with methods for
\textit{(1) attributed graphs}, including methods disregarding node attributes: DeepWalk \cite{perozzi2014deepwalk} and node2vec \cite{grover2016node2vec}, 
and methods considering attributes: 
GCN \cite{kipf2016semi}, GAT \cite{velivckovic2017graph}, DGI \cite{velivckovic2018deep}, ANRL \cite{zhang2018anrl}, CAN \cite{meng2019co}, DGCN \cite{zhuang2018dual}, HDI\cite{jing2021hdmi}, GCA \cite{zhu2021graph} and GraphCL \cite{you2020graph};
\textit{(2) attributed multiplex heterogeneous graphs}, including methods disregarding node attributes:
CMNA \cite{chu2019cross}, MNE \cite{zhang2018scalable},
and methods considering attributes:
mGCN \cite{ma2019multi}, HAN \cite{wang2019heterogeneous}, MvAGC \cite{lin2021graph},
DMGI, DMGI$_{\text{attn}}$ \cite{park2020unsupervised} and HDMI \cite{jing2021hdmi}.

\noindent\textbf{Evaluation Metrics.}
Following \cite{jing2021hdmi}, 
we first extract embeddings from the trained encoder. 
Then we train downstream models with the extracted embeddings, and evaluate models' performance on the following tasks:
\textit{(1) a supervised task}: node classification;
\textit{(2) unsupervised tasks}: node clustering and similarity search.
For the node classification task, we train a logistic regression model and evaluate its performance with Macro-F1 (MaF1) and Micro-F1 (MiF1).
For the node clustering task, we train the K-means algorithm and evaluate it with Normalized Mutual Information (NMI).
For the similarity search task, we first calculate the cosine similarity for each pair of nodes, and for each node, we compute the rate of the nodes to have the same label within its 5 most similar nodes (Sim@5).

\noindent\textbf{Implementation Details.}
We use the one layer 1st-order GCN \cite{kipf2016semi} with tangent activation as the encoder $\mathcal{E}^v=\text{tanh}(\mathbf{A}^v\mathbf{X}\mathbf{W} + \mathbf{X}\mathbf{W}' + \mathbf{b})$.
We set dimension $d=128$ and $p_{drop}=0.5$.
The models are implemented by PyTorch \cite{paszke2019pytorch} and trained on NVIDIA Tesla V-100 GPU.
During training, we first warm up the encoders by training them with the node-level losses $\mathcal{L_N}$ and $\mathcal{R_N}$.
Then we apply the overall loss $\mathcal{L_X}$ with the learning rate of 0.005 for IMDB and 0.001 for other datasets.
We use K-means as the clustering algorithm, and the semantic clustering step is performed every 5 epochs of parameter updating.
We adopt early stopping with the patience of 100 to prevent overfitting.

\subsection{Overall Performance}

\noindent\textbf{X-GOAL on Multiplex Heterogeneous Graphs.}
The overall performance for all of the methods is presented in Tables \ref{tab:overall_classification}-\ref{tab:overall_clustering}, where the upper and middle parts are the methods for homogeneous graphs and multiplex heterogeneous graphs respectively.
``OOM'' means out-of-memory.
Among all the baselines, HDMI has the best overall performance.
The proposed X-GOAL further outperforms HDMI.
The proposed X-GOAL has 0.023/0.019/0.041/0.021 average improvements over the second best scores on Macro-F1/Micro-F1/NMI/Sim@5.
For Macro-F1 and Micro-F1 in Table \ref{tab:overall_classification}, X-GOAL improves the most on the Amazon dataset (0.050/0.044).
For NMI and Sim@5 in Table \ref{tab:overall_clustering}, X-GOAL improves the most on the ACM (0.071) and Amazon (0.050) dataset respectively.
The superior overall performance of X-GOAL demonstrate that the proposed approach can effectively extract informative node embeddings for multiplex heterogeneous graph.

\begin{table*}[t!]
    \scriptsize
    \centering
    \setlength\tabcolsep{4.5pt} 
    \caption{Overall performance of GOAL on each layer: node classification.}
    \begin{tabular}{l|cc|cc|cc|cc|cc|cc|cc|cc|cc|cc}
        \hline
        Dataset & \multicolumn{4}{c|}{ACM} & \multicolumn{4}{c|}{IMDB} & \multicolumn{6}{c|}{DBLP} & \multicolumn{6}{c}{Amazon} \\
        \hline
        View & \multicolumn{2}{c|}{PSP} & \multicolumn{2}{c|}{PAP} & \multicolumn{2}{c|}{MDM} & \multicolumn{2}{c|}{MAM} & \multicolumn{2}{c|}{PAP} & \multicolumn{2}{c|}{PPP} & \multicolumn{2}{c|}{PATAP} & \multicolumn{2}{c|}{IVI} & \multicolumn{2}{c|}{IBI} & \multicolumn{2}{c}{IOI}\\
        \hline
        Metric & MaF1 & MiF1 & MaF1 & MiF1 & MaF1 & MiF1 & MaF1 & MiF1 & MaF1 & MiF1 & MaF1 & MiF1 & MaF1 & MiF1 & MaF1 & MiF1 & MaF1 & MiF1 & MaF1 & MiF1\\
        \hline
        DGI & 0.663 & 0.668 & 0.855 & 0.853 & 0.573 & 0.586 & 0.558 & 0.564 & 0.804 & 0.796 & 0.728 & 0.717 & 0.240 & 0.272 & 0.380 & 0.388 & 0.386 & 0.410 & 0.569 & 0.574\\
        GraphCL & 0.649 & 0.658 & 0.833 & 0.824 & 0.551 & 0.566 & 0.554 & 0.562 & 0.806 & 0.779 & 0.678 & 0.675 & 0.236 & 0.286 & 0.290 & 0.305 &0.335 & 0.348 & 0.506 & 0.516 \\
        GCA & 0.645 & 0.656 & 0.748 & 0.749 & 0.534 & 0.537 &0.489 & 0.500 & 0.716 & 0.710 & 0.679 & 0.665 & OOM & OOM &0.300 & 0.312&0.289&0.304&0.532&0.526\\
        HDI & {0.742} & {0.744} & {0.889} & {0.888} & {0.626} & {0.631} & {0.600} & {0.606} & {0.812} & {0.803} & {0.751} & {0.745} & {0.241} & {0.284} & {0.581} & {0.583} &  {0.524} & {0.529} & {0.796} & {0.799}\\
        \hline
        GOAL & \textbf{0.833} & \textbf{0.836} & \textbf{0.908} & \textbf{0.908} & \textbf{0.649} & \textbf{0.653} & \textbf{0.653} & \textbf{0.652} & \textbf{0.817} & \textbf{0.804} & \textbf{0.765}  & \textbf{0.755} & \textbf{0.755} & \textbf{0.745} & \textbf{0.849} & \textbf{0.848} & \textbf{0.850} & \textbf{0.848} & \textbf{0.851} & \textbf{0.851}\\
        \hline
    \end{tabular}
    \label{tab:view_classification}
\end{table*}

\begin{table*}[t!]
    \scriptsize
    \centering
    \setlength\tabcolsep{4pt} 
    \caption{Overall performance of GOAL on each layer: node clustering and similarity search.}
    \begin{tabular}{l|cc|cc|cc|cc|cc|cc|cc|cc|cc|cc}
        \hline
        Dataset & \multicolumn{4}{c|}{ACM} & \multicolumn{4}{c|}{IMDB} & \multicolumn{6}{c|}{DBLP} & \multicolumn{6}{c}{Amazon} \\
        \hline
        View & \multicolumn{2}{c|}{PSP} & \multicolumn{2}{c|}{PAP} & \multicolumn{2}{c|}{MDM} & \multicolumn{2}{c|}{MAM} & \multicolumn{2}{c|}{PAP} & \multicolumn{2}{c|}{PPP} & \multicolumn{2}{c|}{PATAP} & \multicolumn{2}{c|}{IVI} & \multicolumn{2}{c|}{IBI} & \multicolumn{2}{c}{IOI}\\
        \hline
        Metric & NMI & Sim@5 & NMI & Sim@5 & NMI & Sim@5 & NMI & Sim@5 & NMI & Sim@5 & NMI & Sim@5 & NMI & Sim@5 & NMI & Sim@5 & NMI & Sim@5 & NMI & Sim@5\\
        \hline
        DGI & 0.526 & 0.698 & 0.651 & 0.872 & 0.145 & 0.549 & 0.089 & 0.495 & 0.547 & 0.800 & 0.404 & 0.741 & 0.054 & 0.583 & 0.002 & 0.395 & 0.003 & 0.414 & 0.038 & 0.701\\
        GraphCL & 0.524 & 0.735 & 0.675 & 0.874 &  0.128 & 0.554 & 0.060 & 0.485 & 0.539 & 0.794 & 0.347 & 0.702 & 0.052 & 0.595 &0.001 & 0.334 & 0.002 & 0.360 & 0.036&0.630\\
        GCA & 0.389 & 0.662 & 0.062 & 0.764 & 0.008 & 0.491 & 0.008 & 0.463 & 0.076 & 0.775 & 0.223 & 0.683 & OOM & OOM & 0.002 & 0.315 & 0.007 & 0.329 & 0.008&0.588\\
        HDI & {0.528} & 0.716 & {0.662} & {0.886} & {0.194} & 0.592 & {0.143} & {0.527} & 0.562 & {0.805} & 0.408 & {0.742} & {0.054} & {0.591} & {0.169} & {0.544} & {0.153} & {0.525} & {0.407} & {0.826}\\
        \hline
        GOAL & \textbf{0.600} & \textbf{0.851} & \textbf{0.735} & \textbf{0.917} & \textbf{0.210} & \textbf{0.602} & \textbf{0.180} & \textbf{0.585} & \textbf{0.589} & \textbf{0.809} & \textbf{0.447}  & \textbf{0.757} & \textbf{0.412} & \textbf{0.733} & \textbf{0.551} & \textbf{0.901} & \textbf{0.544} & \textbf{0.903} & \textbf{0.536} & \textbf{0.905}\\
        \hline
    \end{tabular}
    \label{tab:view_clustering}
\end{table*}

\begin{table*}[t!]
    \scriptsize
    \centering
    \caption{Ablation study of X-GOAL at the multiplex heterogeneous graph level.}
    \begin{tabular}{l|cccc|cccc|cccc|cccc}
        \hline
        Dataset & \multicolumn{4}{c|}{ACM} & \multicolumn{4}{c|}{IMDB} & \multicolumn{4}{c|}{DBLP} & \multicolumn{4}{c}{Amazon} \\
        \hline
        Metric & MaF1 & MiF1 & NMI & Sim@5 & MaF1 & MiF1 & NMI & Sim@5 & MaF1 & MiF1 & NMI & Sim@5 & MaF1 & MaF1 & MiF1 & Sim@5 \\
        \hline
        X-GOAL & \textbf{0.922} & \textbf{0.921} & \textbf{0.773} & \textbf{0.924} & \textbf{0.661} & \textbf{0.663} & \textbf{0.221} & \textbf{0.613} & \textbf{0.830} & \textbf{0.819} & \textbf{0.615}  & \textbf{0.809} & \textbf{0.858} & \textbf{0.857} & \textbf{0.556} & \textbf{0.907} \\
        w/o $\mathcal{R}_C$ & 0.919 & 0.917 & 0.770 & 0.922 & 0.658 & 0.661 & 0.211 & 0.606 & 0.817 & 0.807 & 0.611 & 0.804 & 0.856 & 0.856 & 0.555 & 0.906\\
        w/o $\mathcal{R}_N$, $\mathcal{R}_C$ & 0.893 & 0.893 & 0.724 & 0.912 & 0.651 &0.658 & 0.194 & 0.606 & 0.803 & 0.791 & 0.590 & 0.801 & 0.835 & 0.834 & 0.506 & 0.904 \\
        \hline
    \end{tabular}
    \label{tab:ablation_multiplex}
\end{table*}

\noindent\textbf{GOAL on Homogeneous Graph Layers.} 
We compare the proposed GOAL framework with recent infomax-based methods (DGI and HDI) and graph augmentation based methods (GraphCL and GCA).
The experimental results for each single homogeneous graph layer are presented in Tables \ref{tab:view_classification}-\ref{tab:view_clustering}.
It is evident that GOAL significantly outperforms the baseline methods on all single homogeneous graph layers.
On average, GOAL has 0.137/0.129/0.151/0.119 improvements on Macro-F1/Micro-F1/NMI/Sim@5.
For node classification in Table \ref{tab:view_classification}, GOAL improves the most on the PATAP layer of DBLP: 0.514/0.459 on Macro-F1/Micro-F1.
For node clustering and similarity search in Table \ref{tab:view_clustering}, GOAL improves the most on the IBI layer of Amazon: 0.391 on NMI and 0.378 on Sim@5.
The superior performance of GOAL indicates that the proposed prototypical contrastive learning strategy is better than the infomax-based and graph augmentation based instance-wise contrastive learning strategies.
We believe this is because prototypical contrasive learning could effectively reduce the semantic errors.

\subsection{Ablation Study}

\noindent\textbf{Multiplex Heterogeneous Graph Level.}
In Table \ref{tab:ablation_multiplex}, we study the impact of the node-level and semantic-level alignments.
The results in Table \ref{tab:ablation_multiplex} indicate that both of the node-level alignment ($\mathcal{R}_N$) and the semantic-level alignment ($\mathcal{R}_C$) can improve the performance.

\noindent\textbf{Homogeneous Graph Layer Level.}
The results for different configurations of GOAL on the PAP layer of ACM are shown in Table \ref{tab:ablation_study}.
First, all of the warm-up, the semantic-level loss $\mathcal{L_C}$ and the node-level loss $\mathcal{L_N}$ are critical.
Second, comparing GOAL (1st-order GCN with tanh activation) with other GCN variants, (1) with the same activation function, the 1st-order GCN perform better than the original GCN; (2) tanh is better than relu.
We believe this is because the 1st-order GCN has a better capability for capturing the attribute information, and tanh provides a better normalization for the node embeddings.
Finally, for the configurations of graph transformation, if we replace dropout with masking, the performance will drop.
This is because dropout re-scales the outputs by ${1}/{(1-p_{drop})}$, which improves the performance.
Besides, dropout on both attributes and adjacency matrix is important.

\begin{table}[t!]
    \small
    \centering
    \caption{Ablation study of GOAL on the PAP layer of ACM.}
    \setlength\tabcolsep{5pt} 
    \begin{tabular}{l|cccc}
        \hline
         & MaF1 & MiF1 & NMI & Sim@5\\
        \hline
        GOAL &  \textbf{0.908} & \textbf{0.908} & \textbf{0.735} & \textbf{0.917}\\
        \hline
        w/o warm-up & 0.863 & 0.865 & 0.721 & 0.903 \\
        w/o $\mathcal{L_C}$ & 0.865 & 0.867 & 0.693 & 0.899 \\
        w/o $\mathcal{L_N}$ & 0.878 & 0.880 & 0.678 & 0.881 \\
        \hline
        1st-ord. GCN (relu) & 0.865 & 0.866 & 0.559 & 0.859\\
        GCN (tanh) & 0.881 & 0.881 & 0.486 & 0.886 \\
        GCN (relu) & 0.831 & 0.831 & 0.410 & 0.837 \\
        \hline
        dropout $\rightarrow$ masking & 0.888 & 0.890 & 0.716 & 0.903 \\
        w/o attribute drop & 0.843 & 0.845 & 0.568 & 0.869 \\
        w/o adj. matrix drop & 0.888 & 0.888 & 0.715 & 0.903\\
        \hline
    \end{tabular}
    \label{tab:ablation_study}
\end{table}

\begin{figure}[t!]
    \centering
    \begin{subfigure}[b]{.23\textwidth}
        \includegraphics[width=1\textwidth]{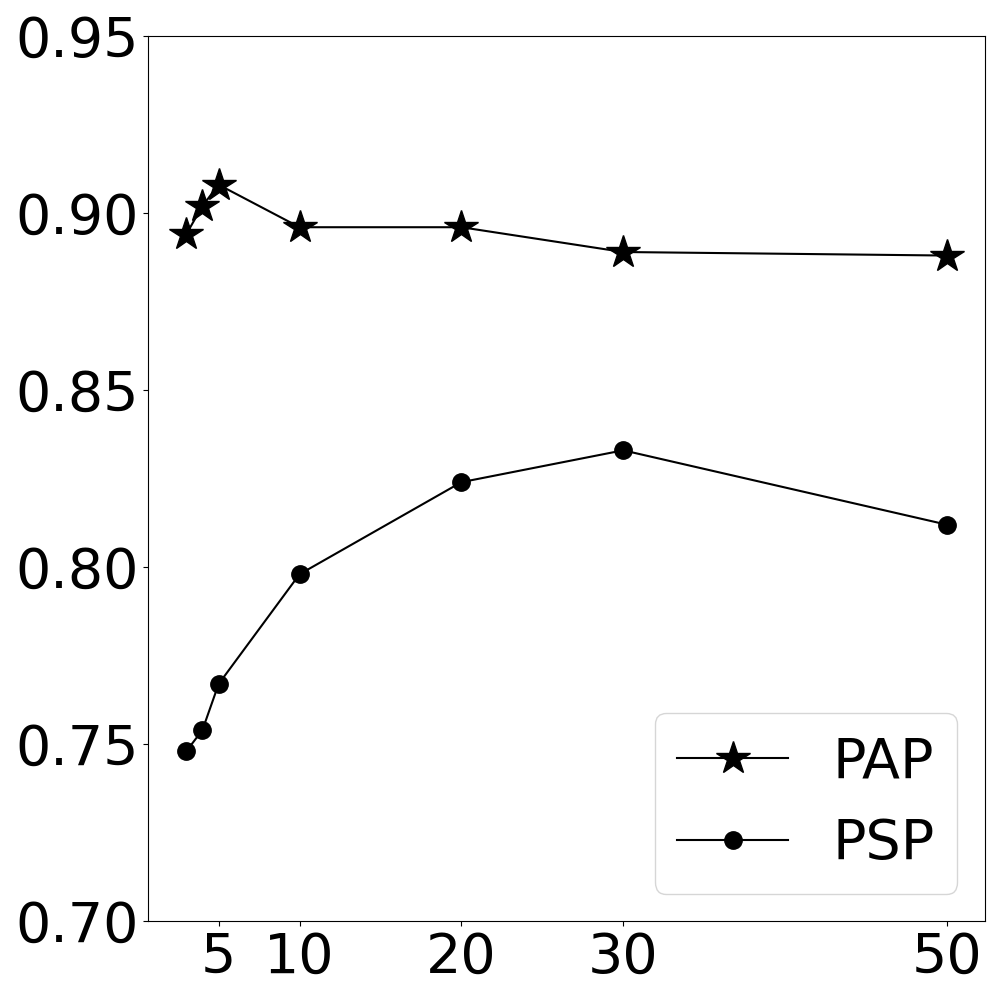}
        \caption{Macro-F1 v.s. $K$}\label{fig:k_f1}
    \end{subfigure}
    \begin{subfigure}[b]{.23\textwidth}
        \includegraphics[width=1\textwidth]{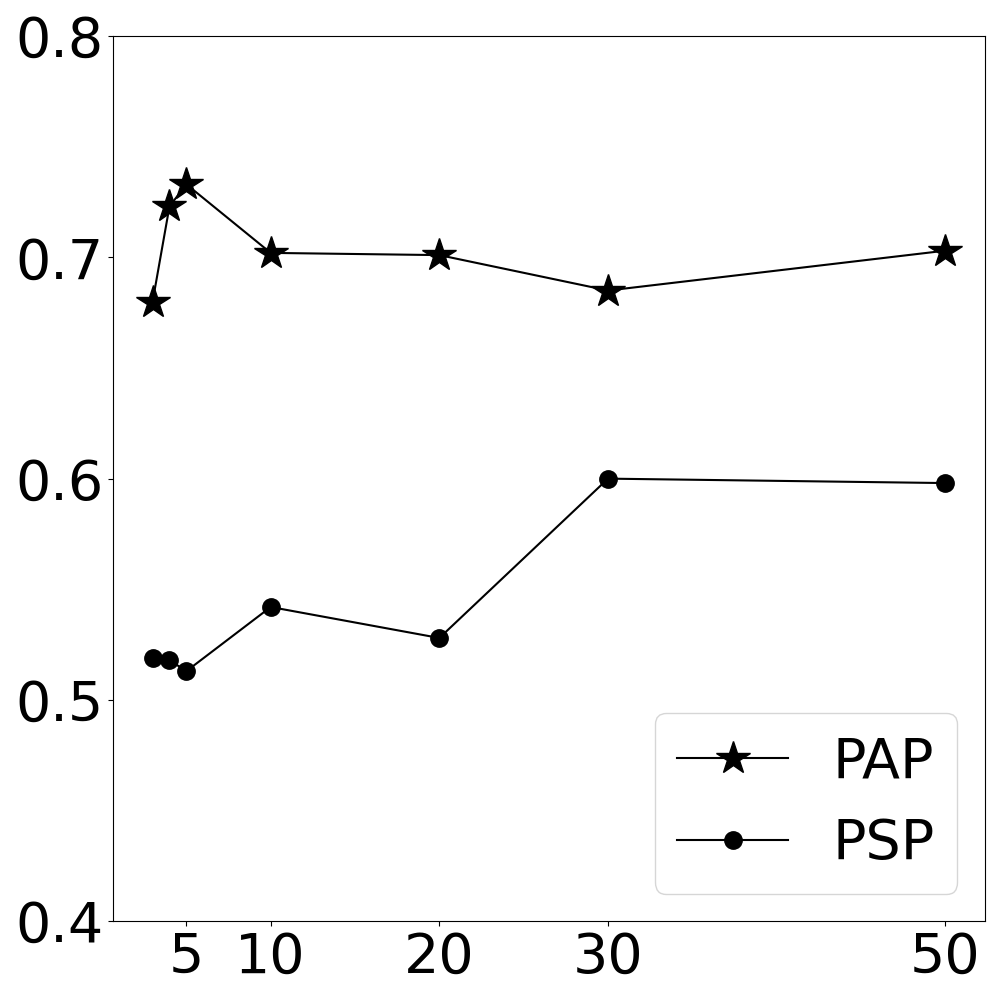}
        \caption{NMI v.s. $K$}\label{fig:k_nmi}
    \end{subfigure}
    \caption{The number of $K$ on PSP and PAP of ACM}
    \label{fig:K}
\end{figure}

\subsection{Number of Clusters}
Figure \ref{fig:K} shows the Macro-F1 and NMI scores on the PSP and PAP layers of ACM w.r.t. the number of clusters $K\in[3,4,5,10,20, 30,50]$.
For PSP and PAP, the best Macro-F1 and NMI scores are obtained when $K=30$ and $K=5$.
The number of ground-truth classes for ACM is 3, and the results in Figure \ref{fig:K} indicate that over-clustering is beneficial. 
We believe this is because there are many sub-clusters in the embedding space, which is consistent with the prior findings on image data \cite{li2020prototypical}.

\begin{figure*}[t!]
    \centering
    \begin{subfigure}[b]{.245\textwidth}
        \includegraphics[width=1\textwidth]{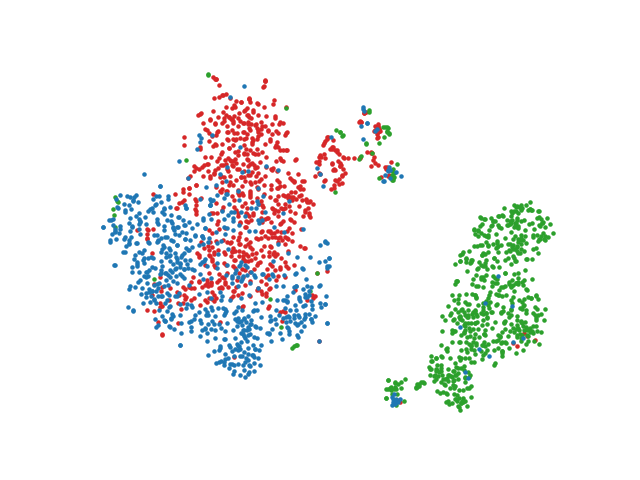}
        \caption{$\mathcal{L_N}$ on PSP}\label{fig:tsne_wp_1}
    \end{subfigure}
    \begin{subfigure}[b]{.245\textwidth}
        \includegraphics[width=1\textwidth]{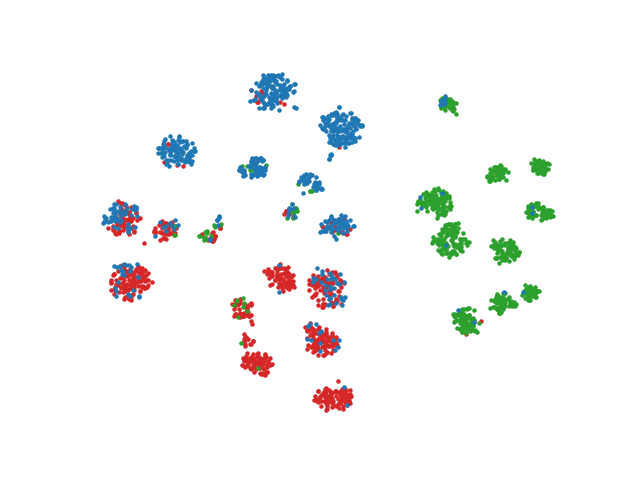}
        \caption{$\mathcal{L_N}+\mathcal{L_C}$ on PSP}\label{fig:tsne_pcl_1}
    \end{subfigure}
    \begin{subfigure}[b]{.245\textwidth}
        \includegraphics[width=1\textwidth]{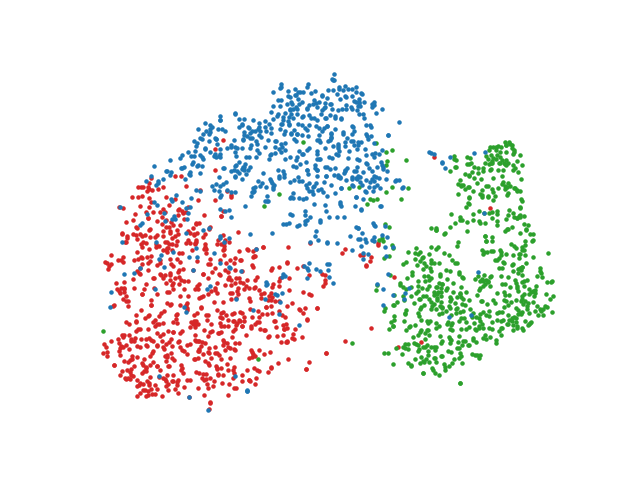}
        \caption{$\mathcal{L_N}$ on PAP}\label{fig:tsne_wp_2}
    \end{subfigure}
    \begin{subfigure}[b]{.245\textwidth}\label{fig:tsne_pcl_2}
        \includegraphics[width=1\textwidth]{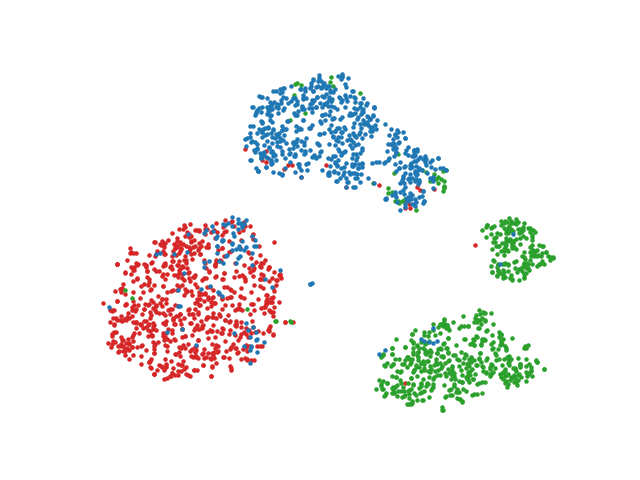}
        \caption{$\mathcal{L_N}+\mathcal{L_C}$ on PAP}
    \end{subfigure}
    \caption{Visualization of the embeddings for the PAP and PSP layers of the ACM graph.}
    \label{fig:visualization_views}
\end{figure*}

\begin{figure*}[t!]
    \centering
    \begin{subfigure}[b]{.245\textwidth}
        \includegraphics[width=1\textwidth]{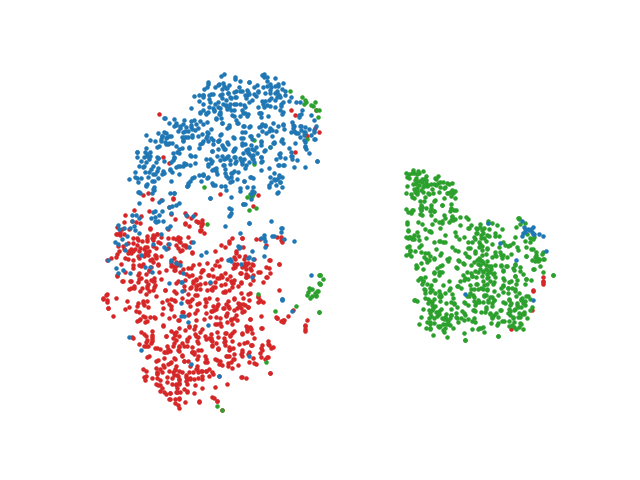}
        \caption{$\mathcal{L_N}$}\label{fig:tsne_wp_avg}
    \end{subfigure}
    \begin{subfigure}[b]{.245\textwidth}
        \includegraphics[width=1\textwidth]{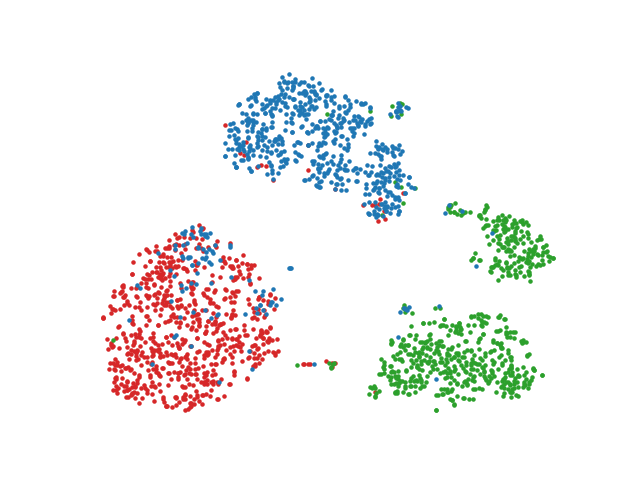}
        \caption{$\mathcal{L_N}+\mathcal{L_C}$}\label{fig:tsne_pcl_avg}
    \end{subfigure}
    \begin{subfigure}[b]{.245\textwidth}
        \includegraphics[width=1\textwidth]{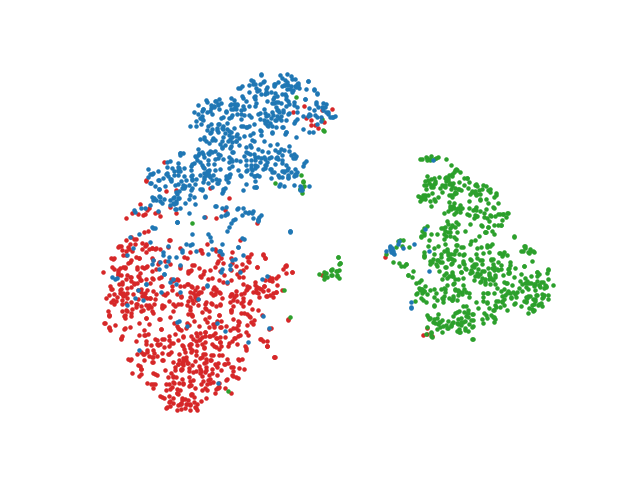}
        \caption{$\mathcal{L_N}+\mathcal{L_C}+\mathcal{R_N}$}\label{fig:tsne_no_reg}
    \end{subfigure}
    \begin{subfigure}[b]{.245\textwidth}
        \includegraphics[width=1\textwidth]{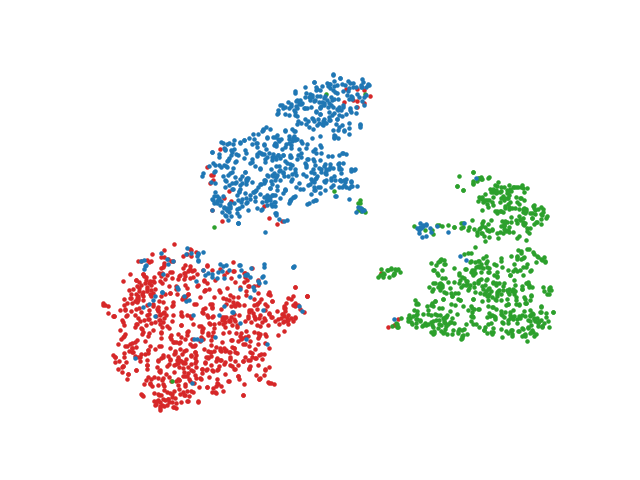}
        \caption{$\mathcal{L_N}+\mathcal{L_C}+\mathcal{R_N}+\mathcal{R_C}$}\label{fig:tsne_full}
    \end{subfigure}
    \caption{Visualization of the combined embeddings for the ACM graph.}
    \label{fig:visualization_multiplex}
\end{figure*}

\subsection{Visualization}

\noindent\textbf{Homogeneous Graph Layer Level.} The t-SNE \cite{maaten2008visualizing} visualizations of the embeddings for PSP and PAP of ACM are presented in Figure \ref{fig:visualization_views}.
$\mathcal{L_N}$, $\mathcal{L_C}$, $\mathcal{R_N}$ and $\mathcal{R_C}$ are the node-level loss, cluster-level loss, node-level alignment and cluster-level alignment.
The embeddings extracted by the full GOAL framework ($\mathcal{L_N}+\mathcal{L_C}$) are better separated than the node-level loss $\mathcal{L_N}$ only.
For GOAL, the numbers of clusters for PSP and PAP are 30 and 5 since they have the best performance as shown in Figure \ref{fig:K}.

\noindent\textbf{Multiplex Heterogeneous Graph Level.}
The visualizations for the combined embeddings are shown in Figure \ref{fig:visualization_multiplex}.
Embeddings in Figures \ref{fig:tsne_wp_avg}-\ref{fig:tsne_pcl_avg} are the average pooling of the layer-specific embeddings in Figure \ref{fig:visualization_views}.
Figure \ref{fig:tsne_no_reg} and \ref{fig:tsne_full} are X-GOAL w/o cluster-level alignment and the full X-GOAL.
Generally, the full X-GOAL best separates different clusters.

\section{Related Work}\label{sec:related}

\subsection{Contrastive Learning for Graphs}
The goal of CL is to pull similar nodes into close positions and push dis-similar nodes far apart in the embedding space.
Inspired by word2vec \cite{mikolov2013distributed}, early methods, such as DeepWalk \cite{perozzi2014deepwalk} and node2vec \cite{grover2016node2vec} use random walks to sample positive pairs of nodes.
LINE \cite{tang2015line} and SDNE \cite{wang2016structural} determine the positive node pairs by their first and second-order structural proximity.
Recent methods leverage graph transformation to generate node pairs.
DGI \cite{velivckovic2018deep}, GMI \cite{peng2020graph}, HDI \cite{jing2021hdmi} and CommDGI \cite{zhang2020commdgi} obtain negative samples by randomly shuffling the node attributes.
MVGRL \cite{hassani2020contrastive} transforms graphs via techniques such as graph diffusion \cite{klicpera2019diffusion}.
The objective of the above methods is to maximize the mutual information of the positive embedding pairs.
GraphCL \cite{you2020graph} uses various graph augmentations to obtain positive nodes.
GCA \cite{zhu2021graph} generates positive and negative pairs based on their importance.
gCool \cite{li2022graph} introduces graph communal contrastive learning.
Ariel \cite{feng2022adversarial, https://doi.org/10.48550/arxiv.2208.06956} proposes a information regularized adversarial graph contrastive learning.
These methods use the contrastive losses similar to InfoNCE \cite{oord2018representation}.

For multiplex heterogeneous graphs, MNE \cite{zhang2018scalable}, MVN2VEC \cite{shi2018mvn2vec} and GATNE \cite{cen2019representation} sample node pairs based on random walks.
DMGI \cite{park2020unsupervised} and HDMI \cite{jing2021hdmi} use random attribute shuffling to sample negative nodes.
HeCo \cite{wang2021self} decides positive and negative pairs based on the connectivity between nodes.
Above methods mainly rely on the topological structures to pair nodes, yet do not fully explore the semantic information, which could introduce semantic errors.

\subsection{Deep Clustering and Contrastive Learning}
Clustering algorithms \cite{xie2016unsupervised, caron2018deep} can capture the semantic clusters of instances.
DeepCluster \cite{caron2018deep} is one of the earliest works which use cluster assignments as ``pseudo-labels" to update the parameters of the encoder.
DEC \cite{xie2016unsupervised} learns a mapping from the data space to a lower-dimensional feature space in which it iteratively optimizes a clustering objective. 
Inspired by these works, SwAV \cite{caron2020unsupervised} and PCL \cite{li2020prototypical} combine deep clustering with CL. 
SwAV compares the cluster assignments rather than the embeddings of two images.
PCL is the closest to our work, which alternatively performs clustering to obtain the latent prototypes and train the encoder by contrasting positive and negative pairs of nodes and prototypes.
However, PCL has some limitations compared with the proposed X-GOAL: it is designed for single view image data; it heavily relies on data augmentations and momentum contrast \cite{he2020momentum}; it has some complex assumptions over cluster distributions and embeddings.

\subsection{Multiplex Heterogeneous Graph Neural Networks} 
The multiplex heterogeneous graph \cite{cen2019representation} considers multiple relations among nodes, and it is also known as multiplex graph \cite{park2020unsupervised, jing2021hdmi}, multi-view graph \cite{qu2017attention}, multi-layer graph \cite{li2018multi} and multi-dimension graph \cite{ma2018multi}. 
MVE \cite{qu2017attention} and HAN \cite{wang2019heterogeneous} uses attention mechanisms to combine embeddings from different views.
mGCN \cite{ma2019multi} models both within and across view interactions. 
VANE \cite{fu2020view} uses adversarial training to improve the comprehensiveness and robustness of the embeddings.
Multiplex graph neural networks have been used in many applications 
\cite{du2021new}, such as
time series \cite{jing2021network}, 
text summarization \cite{jing2021multiplex}, 
temporal graphs \cite{DBLP:conf/kdd/FuFMTH22},
graph alignment \cite{xiong2021contrastive}, 
abstract reasoning \cite{Wang2020Abstract}, global poverty \cite{khan2019multi} and bipartite graphs \cite{xue2021multiplex}.

\subsection{Deep Graph Clustering}
Graph clustering aims at discovering groups in graphs. 
SAE \cite{tian2014learning} and MGAE \cite{wang2017mgae} first train a GNN, and then run a clustering algorithm over node embeddings to obtain the clusters.
DAEGC \cite{wang2019attributed} and SDCN \cite{bo2020structural} jointly optimize clustering algorithms and the graph reconstruction loss.
AGC \cite{zhang2019attributed} adaptively finds the optimal order for graph filters based on the intrinsic clustering scores.
M3S \cite{sun2020multi} uses clustering to enlarge the labeled data with pseudo labels.  
SDCN \cite{bo2020structural} proposes a structural deep clustering network to integrate the structural information into deep clustering.
COIN \cite{jing2022coin} co-clusters two types of nodes in bipartite graphs.
MvAGC \cite{lin2021graph} extends AGC \cite{zhang2019attributed} to multi-view settings.
However, MvAGC is not neural network based methods which might not exploit the attribute and non-linearity information.
Recent methods combine CL with clustering to further improve the performance. 
SCAGC \cite{xia2021self} treats nodes within the same cluster as positive pairs.
MCGC \cite{pan2021multi} combines CL with MvAGC \cite{lin2021graph}, which treats each node with its neighbors as positive pairs. 
Different from SCAGC and MCGC, the proposed GOAL and X-GOAL capture the semantic information by treating a node with its corresponding cluster center as a positive pair.

\section{Conclusion}\label{sec:conclusion}
In this paper, we introduce a novel X-GOAL framework for multiplex heterogeneous graphs, which is comprised of a GOAL framework for each homogeneous graph layer and an alignment regularization to jointly model different layers.
The GOAL framework captures both node-level and cluster-level information.
The alignment regularization is a nimble technique to jointly model and propagate information across different layers, which could maximize the mutual information of different layers.
The experimental results on real-world multiplex heterogeneous graphs demonstrate the effectiveness of the proposed X-GOAL framework.






\appendix

\section{Derivation of Cluster-Level Loss}
The node-level contrastive loss is usually noisy, which could introduce semantic errors by treating two semantic similar nodes as a negative pair.
To tackle this issue, we use a clustering algorithm $\mathcal{C}$ (e.g. K-means) 
to obtain the semantic clusters of nodes, and we use the EM algorithm to update the parameters of $\mathcal{E}$ to pull node embeddings closer to their assigned clusters (or prototypes).

Following \cite{li2020prototypical}, we maximize the following log likelihood:
\begin{equation}\label{eq:obj}
    \sum_{n=1}^N\log p(\mathbf{h}_n|\mathbf{\Theta}, \mathbf{C}) = \sum_{n=1}^N\log \sum_{k=1}^Kp(\mathbf{h}_n, k|\mathbf{\Theta}, \mathbf{C})
\end{equation}
where $\mathbf{h}_n$ is the $n$-th row of $\mathbf{h}$, $\mathbf{\Theta}$ and $\mathbf{C}$ are the parameters of $\mathcal{E}$ and K-means algorithm $\mathcal{C}$, $k\in[1,\cdots, K]$ is the cluster index, and $K$ is the number of clusters.
Directly optimizing this objective is impracticable since the cluster index is a latent variable.

The Evidence Lower Bound (ELBO) of Equation \eqref{eq:obj} is given by:
\begin{equation}
    \text{ELBO} = \sum_{n=1}^N\sum_{k=1}^K Q(k|\mathbf{h}_n)\log \frac{p(\mathbf{h}_n, k|\mathbf{\Theta}, \mathbf{C})}{Q(k|\mathbf{h}_n)}
\end{equation}
where $Q(k|\mathbf{h}_n)=p(k|\mathbf{h}_n, \mathbf{\Theta}, \mathbf{C})$ is the auxiliary function.

In the E-step, we fix $\mathbf{\Theta}$ and estimate the cluster centers $\hat{\mathbf{C}}$ and the cluster assignments $\hat{Q}(k|\mathbf{h}_n)$ by running the K-means algorithm over the embeddings of the original graph $\mathbf{H}=\mathcal{E}(\mathcal{G})$.
If a node $\mathbf{h}_n$ belongs to the cluster $k$, then its auxiliary function is an indicator function satisfying 
$\hat{Q}(k|\mathbf{h}_n)=1$, and $\hat{Q}(k'|\mathbf{h}_n)=0$ for $\forall k'\neq k$.

In the M-step, based on $\hat{\mathbf{C}}$ and $\hat{Q}(k|\mathbf{h}_n)$ obtained in the E-step, we update $\mathbf{\Theta}$ by maximizing ELBO:
\begin{equation}
\begin{split}
    \text{ELBO} &= \sum_{n=1}^N\sum_{k=1}^K \hat{Q}(k|\mathbf{h}_n)\log p(\mathbf{h}_n, k|\mathbf{\Theta}, \hat{\mathbf{C}})\\
    &- \sum_{n=1}^N\sum_{k=1}^K\hat{Q}(k|\mathbf{h}_n)\log\hat{Q}(k|\mathbf{h}_n)
\end{split}
\end{equation}
Dropping the second term of the above equation, which is a constant, we will minimize the following loss function:
\begin{equation}\label{eq:semantic_loss_0}
\begin{split}
    \mathcal{L}_\mathcal{C} = -\sum_{n=1}^N\sum_{k=1}^K \hat{Q}(k|\mathbf{h}_n)\log p(\mathbf{h}_n, k|\mathbf{\Theta}, \hat{\mathbf{C}})
\end{split}
\end{equation}
Assuming a uniform prior distribution over $\mathbf{h}_n$, we have:
\begin{equation}
    p(\mathbf{h}_n, k|\mathbf{\Theta}, \hat{\mathbf{C}}) \propto p(k|\mathbf{h}_n, \mathbf{\Theta}, \hat{\mathbf{C}})
\end{equation}
We define $p(k|\mathbf{h}_n, \mathbf{\Theta}, \hat{\mathbf{C}})$ by:
\begin{equation}\label{eq:semantic_codes}
    p(k|\mathbf{h}_n, \mathbf{\Theta}, \hat{\mathbf{C}}) = \frac{\mathbf{e}^{(\hat{\mathbf{c}}_k^T\cdot\mathbf{h}_n/\tau)}}{\sum_{k'=1}^K\mathbf{e}^{(\hat{\mathbf{c}}_{k'}^T\cdot\mathbf{h}_n/\tau)}}
\end{equation}
where $\mathbf{h}_n\in\mathbb{R}^d$ is the embedding of the node $\mathbf{x}_n$, $\hat{\mathbf{c}}_k\in\mathbb{R}^d$ is the vector of the $k$-th cluster center, $\tau$ is the temperature parameter.

Let's use $k_n$ to denote the cluster assignment of $\mathbf{h}_n$, and normalize the loss by $\frac{1}{N}$, then Equation \eqref{eq:semantic_loss_0} can be rewritten as: 
\begin{equation}
    \mathcal{L}_\mathcal{C} = -\frac{1}{N}\sum_{n=1}^N\log \frac{\mathbf{e}^{(\mathbf{c}_{k_n}^T\cdot\mathbf{h}_n/\tau)}}{\sum_{k=1}^K\mathbf{e}^{(\mathbf{c}_{k}^T\cdot\mathbf{h}_n/\tau)}}
\end{equation}

The above loss function captures the semantic similarities between nodes by pulling nodes within the same cluster closer to their assigned cluster center.

\begin{acks}
BJ and HT are partially supported by NSF (1947135, 
2134079 
and 1939725
), and NIFA (2020-67021-32799).
\end{acks}

\bibliographystyle{ACM-Reference-Format}
\balance
\bibliography{sample-base}


\begin{thebibliography}{77}


\ifx \showCODEN    \undefined \def \showCODEN     #1{\unskip}     \fi
\ifx \showDOI      \undefined \def \showDOI       #1{#1}\fi
\ifx \showISBNx    \undefined \def \showISBNx     #1{\unskip}     \fi
\ifx \showISBNxiii \undefined \def \showISBNxiii  #1{\unskip}     \fi
\ifx \showISSN     \undefined \def \showISSN      #1{\unskip}     \fi
\ifx \showLCCN     \undefined \def \showLCCN      #1{\unskip}     \fi
\ifx \shownote     \undefined \def \shownote      #1{#1}          \fi
\ifx \showarticletitle \undefined \def \showarticletitle #1{#1}   \fi
\ifx \showURL      \undefined \def \showURL       {\relax}        \fi
\providecommand\bibfield[2]{#2}
\providecommand\bibinfo[2]{#2}
\providecommand\natexlab[1]{#1}
\providecommand\showeprint[2][]{arXiv:#2}

\bibitem[\protect\citeauthoryear{Bo, Wang, Shi, Zhu, Lu, and Cui}{Bo
  et~al\mbox{.}}{2020}]%
        {bo2020structural}
\bibfield{author}{\bibinfo{person}{Deyu Bo}, \bibinfo{person}{Xiao Wang},
  \bibinfo{person}{Chuan Shi}, \bibinfo{person}{Meiqi Zhu},
  \bibinfo{person}{Emiao Lu}, {and} \bibinfo{person}{Peng Cui}.}
  \bibinfo{year}{2020}\natexlab{}.
\newblock \showarticletitle{Structural deep clustering network}. In
  \bibinfo{booktitle}{\emph{Proceedings of The Web Conference 2020}}.
\newblock


\bibitem[\protect\citeauthoryear{Caron, Bojanowski, Joulin, and Douze}{Caron
  et~al\mbox{.}}{2018}]%
        {caron2018deep}
\bibfield{author}{\bibinfo{person}{Mathilde Caron}, \bibinfo{person}{Piotr
  Bojanowski}, \bibinfo{person}{Armand Joulin}, {and} \bibinfo{person}{Matthijs
  Douze}.} \bibinfo{year}{2018}\natexlab{}.
\newblock \showarticletitle{Deep clustering for unsupervised learning of visual
  features}. In \bibinfo{booktitle}{\emph{ECCV}}.
\newblock


\bibitem[\protect\citeauthoryear{Caron, Misra, Mairal, Goyal, Bojanowski, and
  Joulin}{Caron et~al\mbox{.}}{2020}]%
        {caron2020unsupervised}
\bibfield{author}{\bibinfo{person}{Mathilde Caron}, \bibinfo{person}{Ishan
  Misra}, \bibinfo{person}{Julien Mairal}, \bibinfo{person}{Priya Goyal},
  \bibinfo{person}{Piotr Bojanowski}, {and} \bibinfo{person}{Armand Joulin}.}
  \bibinfo{year}{2020}\natexlab{}.
\newblock \showarticletitle{Unsupervised learning of visual features by
  contrasting cluster assignments}.
\newblock \bibinfo{journal}{\emph{NeurIPS}} (\bibinfo{year}{2020}).
\newblock


\bibitem[\protect\citeauthoryear{Cen, Zou, Zhang, Yang, Zhou, and Tang}{Cen
  et~al\mbox{.}}{2019}]%
        {cen2019representation}
\bibfield{author}{\bibinfo{person}{Yukuo Cen}, \bibinfo{person}{Xu Zou},
  \bibinfo{person}{Jianwei Zhang}, \bibinfo{person}{Hongxia Yang},
  \bibinfo{person}{Jingren Zhou}, {and} \bibinfo{person}{Jie Tang}.}
  \bibinfo{year}{2019}\natexlab{}.
\newblock \showarticletitle{Representation learning for attributed multiplex
  heterogeneous network}. In \bibinfo{booktitle}{\emph{Proceedings of the 25th
  ACM SIGKDD International Conference on Knowledge Discovery \& Data Mining}}.
  \bibinfo{pages}{1358--1368}.
\newblock


\bibitem[\protect\citeauthoryear{Chu, Fan, Yao, Zhu, Huang, and Bi}{Chu
  et~al\mbox{.}}{2019}]%
        {chu2019cross}
\bibfield{author}{\bibinfo{person}{Xiaokai Chu}, \bibinfo{person}{Xinxin Fan},
  \bibinfo{person}{Di Yao}, \bibinfo{person}{Zhihua Zhu},
  \bibinfo{person}{Jianhui Huang}, {and} \bibinfo{person}{Jingping Bi}.}
  \bibinfo{year}{2019}\natexlab{}.
\newblock \showarticletitle{Cross-network embedding for multi-network
  alignment}. In \bibinfo{booktitle}{\emph{The World Wide Web Conference}}.
  \bibinfo{pages}{273--284}.
\newblock


\bibitem[\protect\citeauthoryear{Du, Yuan, Barton, Neiman, and Tong}{Du
  et~al\mbox{.}}{2021a}]%
        {du2021hypergraph}
\bibfield{author}{\bibinfo{person}{Boxin Du}, \bibinfo{person}{Changhe Yuan},
  \bibinfo{person}{Robert Barton}, \bibinfo{person}{Tal Neiman}, {and}
  \bibinfo{person}{Hanghang Tong}.} \bibinfo{year}{2021}\natexlab{a}.
\newblock \showarticletitle{Hypergraph Pre-training with Graph Neural
  Networks}.
\newblock \bibinfo{journal}{\emph{arXiv preprint arXiv:2105.10862}}
  (\bibinfo{year}{2021}).
\newblock


\bibitem[\protect\citeauthoryear{Du, Zhang, Yan, and Tong}{Du
  et~al\mbox{.}}{2021b}]%
        {du2021new}
\bibfield{author}{\bibinfo{person}{Boxin Du}, \bibinfo{person}{Si Zhang},
  \bibinfo{person}{Yuchen Yan}, {and} \bibinfo{person}{Hanghang Tong}.}
  \bibinfo{year}{2021}\natexlab{b}.
\newblock \showarticletitle{New Frontiers of Multi-Network Mining: Recent
  Developments and Future Trend}. In \bibinfo{booktitle}{\emph{Proceedings of
  the 27th ACM SIGKDD Conference on Knowledge Discovery \& Data Mining}}.
  \bibinfo{pages}{4038--4039}.
\newblock


\bibitem[\protect\citeauthoryear{Feng, Jing, Zhu, and Tong}{Feng
  et~al\mbox{.}}{2022a}]%
        {feng2022adversarial}
\bibfield{author}{\bibinfo{person}{Shengyu Feng}, \bibinfo{person}{Baoyu Jing},
  \bibinfo{person}{Yada Zhu}, {and} \bibinfo{person}{Hanghang Tong}.}
  \bibinfo{year}{2022}\natexlab{a}.
\newblock \showarticletitle{Adversarial graph contrastive learning with
  information regularization}. In \bibinfo{booktitle}{\emph{Proceedings of the
  ACM Web Conference 2022}}. \bibinfo{pages}{1362--1371}.
\newblock


\bibitem[\protect\citeauthoryear{Feng, Jing, Zhu, and Tong}{Feng
  et~al\mbox{.}}{2022b}]%
        {https://doi.org/10.48550/arxiv.2208.06956}
\bibfield{author}{\bibinfo{person}{Shengyu Feng}, \bibinfo{person}{Baoyu Jing},
  \bibinfo{person}{Yada Zhu}, {and} \bibinfo{person}{Hanghang Tong}.}
  \bibinfo{year}{2022}\natexlab{b}.
\newblock \bibinfo{title}{ARIEL: Adversarial Graph Contrastive Learning}.
\newblock
\newblock
\urldef\tempurl%
\url{https://doi.org/10.48550/ARXIV.2208.06956}
\showDOI{\tempurl}


\bibitem[\protect\citeauthoryear{Fu, Fang, Maciejewski, Torvik, and He}{Fu
  et~al\mbox{.}}{2022}]%
        {DBLP:conf/kdd/FuFMTH22}
\bibfield{author}{\bibinfo{person}{Dongqi Fu}, \bibinfo{person}{Liri Fang},
  \bibinfo{person}{Ross Maciejewski}, \bibinfo{person}{Vetle~I. Torvik}, {and}
  \bibinfo{person}{Jingrui He}.} \bibinfo{year}{2022}\natexlab{}.
\newblock \showarticletitle{Meta-Learned Metrics over Multi-Evolution Temporal
  Graphs}. In \bibinfo{booktitle}{\emph{KDD 2022}}.
\newblock


\bibitem[\protect\citeauthoryear{Fu, Xu, Li, Tong, and He}{Fu
  et~al\mbox{.}}{2020}]%
        {fu2020view}
\bibfield{author}{\bibinfo{person}{Dongqi Fu}, \bibinfo{person}{Zhe Xu},
  \bibinfo{person}{Bo Li}, \bibinfo{person}{Hanghang Tong}, {and}
  \bibinfo{person}{Jingrui He}.} \bibinfo{year}{2020}\natexlab{}.
\newblock \showarticletitle{A View-Adversarial Framework for Multi-View Network
  Embedding}. In \bibinfo{booktitle}{\emph{CIKM}}.
\newblock


\bibitem[\protect\citeauthoryear{Grover and Leskovec}{Grover and
  Leskovec}{2016}]%
        {grover2016node2vec}
\bibfield{author}{\bibinfo{person}{Aditya Grover} {and} \bibinfo{person}{Jure
  Leskovec}.} \bibinfo{year}{2016}\natexlab{}.
\newblock \showarticletitle{node2vec: Scalable feature learning for networks}.
  In \bibinfo{booktitle}{\emph{Proceedings of the 22nd ACM SIGKDD}}.
  \bibinfo{pages}{855--864}.
\newblock


\bibitem[\protect\citeauthoryear{Hamilton, Ying, and Leskovec}{Hamilton
  et~al\mbox{.}}{2017}]%
        {hamilton2017representation}
\bibfield{author}{\bibinfo{person}{William~L Hamilton}, \bibinfo{person}{Rex
  Ying}, {and} \bibinfo{person}{Jure Leskovec}.}
  \bibinfo{year}{2017}\natexlab{}.
\newblock \showarticletitle{Representation learning on graphs: Methods and
  applications}.
\newblock \bibinfo{journal}{\emph{arXiv preprint arXiv:1709.05584}}
  (\bibinfo{year}{2017}).
\newblock


\bibitem[\protect\citeauthoryear{Hassani and Khasahmadi}{Hassani and
  Khasahmadi}{2020}]%
        {hassani2020contrastive}
\bibfield{author}{\bibinfo{person}{Kaveh Hassani} {and}
  \bibinfo{person}{Amir~Hosein Khasahmadi}.} \bibinfo{year}{2020}\natexlab{}.
\newblock \showarticletitle{Contrastive Multi-View Representation Learning on
  Graphs}.
\newblock \bibinfo{journal}{\emph{arXiv preprint arXiv:2006.05582}}
  (\bibinfo{year}{2020}).
\newblock


\bibitem[\protect\citeauthoryear{He, Fan, Wu, Xie, and Girshick}{He
  et~al\mbox{.}}{2020}]%
        {he2020momentum}
\bibfield{author}{\bibinfo{person}{Kaiming He}, \bibinfo{person}{Haoqi Fan},
  \bibinfo{person}{Yuxin Wu}, \bibinfo{person}{Saining Xie}, {and}
  \bibinfo{person}{Ross Girshick}.} \bibinfo{year}{2020}\natexlab{}.
\newblock \showarticletitle{Momentum contrast for unsupervised visual
  representation learning}. In \bibinfo{booktitle}{\emph{CVPR}}.
\newblock


\bibitem[\protect\citeauthoryear{Hu, Liu, Gomes, Zitnik, Liang, Pande, and
  Leskovec}{Hu et~al\mbox{.}}{2019}]%
        {hu2019strategies}
\bibfield{author}{\bibinfo{person}{Weihua Hu}, \bibinfo{person}{Bowen Liu},
  \bibinfo{person}{Joseph Gomes}, \bibinfo{person}{Marinka Zitnik},
  \bibinfo{person}{Percy Liang}, \bibinfo{person}{Vijay Pande}, {and}
  \bibinfo{person}{Jure Leskovec}.} \bibinfo{year}{2019}\natexlab{}.
\newblock \showarticletitle{Strategies for pre-training graph neural networks}.
\newblock \bibinfo{journal}{\emph{arXiv preprint arXiv:1905.12265}}
  (\bibinfo{year}{2019}).
\newblock


\bibitem[\protect\citeauthoryear{Jiao, Xiong, Zhang, Zhang, Zhang, and
  Zhu}{Jiao et~al\mbox{.}}{2020}]%
        {9338425}
\bibfield{author}{\bibinfo{person}{Yizhu Jiao}, \bibinfo{person}{Yun Xiong},
  \bibinfo{person}{Jiawei Zhang}, \bibinfo{person}{Yao Zhang},
  \bibinfo{person}{Tianqi Zhang}, {and} \bibinfo{person}{Yangyong Zhu}.}
  \bibinfo{year}{2020}\natexlab{}.
\newblock \showarticletitle{Sub-Graph Contrast for Scalable Self-Supervised
  Graph Representation Learning}. In \bibinfo{booktitle}{\emph{2020 IEEE
  International Conference on Data Mining (ICDM)}}.
\newblock


\bibitem[\protect\citeauthoryear{Jing, Park, and Tong}{Jing
  et~al\mbox{.}}{2021a}]%
        {jing2021hdmi}
\bibfield{author}{\bibinfo{person}{Baoyu Jing}, \bibinfo{person}{Chanyoung
  Park}, {and} \bibinfo{person}{Hanghang Tong}.}
  \bibinfo{year}{2021}\natexlab{a}.
\newblock \showarticletitle{Hdmi: High-order deep multiplex infomax}. In
  \bibinfo{booktitle}{\emph{Proceedings of the Web Conference 2021}}.
  \bibinfo{pages}{2414--2424}.
\newblock


\bibitem[\protect\citeauthoryear{Jing, Tong, and Zhu}{Jing
  et~al\mbox{.}}{2021b}]%
        {jing2021network}
\bibfield{author}{\bibinfo{person}{Baoyu Jing}, \bibinfo{person}{Hanghang
  Tong}, {and} \bibinfo{person}{Yada Zhu}.} \bibinfo{year}{2021}\natexlab{b}.
\newblock \showarticletitle{Network of Tensor Time Series}. In
  \bibinfo{booktitle}{\emph{The World Wide Web Conference}}.
\newblock
\urldef\tempurl%
\url{https://doi.org/10.1145/3442381.3449969}
\showDOI{\tempurl}


\bibitem[\protect\citeauthoryear{Jing, Yan, Zhu, and Tong}{Jing
  et~al\mbox{.}}{2022}]%
        {jing2022coin}
\bibfield{author}{\bibinfo{person}{Baoyu Jing}, \bibinfo{person}{Yuchen Yan},
  \bibinfo{person}{Yada Zhu}, {and} \bibinfo{person}{Hanghang Tong}.}
  \bibinfo{year}{2022}\natexlab{}.
\newblock \showarticletitle{COIN: Co-Cluster Infomax for Bipartite Graphs}.
\newblock \bibinfo{journal}{\emph{arXiv preprint arXiv:2206.00006}}
  (\bibinfo{year}{2022}).
\newblock


\bibitem[\protect\citeauthoryear{Jing, You, Yang, Fan, and Tong}{Jing
  et~al\mbox{.}}{2021c}]%
        {jing2021multiplex}
\bibfield{author}{\bibinfo{person}{Baoyu Jing}, \bibinfo{person}{Zeyu You},
  \bibinfo{person}{Tao Yang}, \bibinfo{person}{Wei Fan}, {and}
  \bibinfo{person}{Hanghang Tong}.} \bibinfo{year}{2021}\natexlab{c}.
\newblock \showarticletitle{Multiplex Graph Neural Network for Extractive Text
  Summarization}. In \bibinfo{booktitle}{\emph{Proceedings of the 2021
  Conference on Empirical Methods in Natural Language Processing}}.
  \bibinfo{pages}{133--139}.
\newblock


\bibitem[\protect\citeauthoryear{Khan and Blumenstock}{Khan and
  Blumenstock}{2019}]%
        {khan2019multi}
\bibfield{author}{\bibinfo{person}{Muhammad~Raza Khan} {and}
  \bibinfo{person}{Joshua~E Blumenstock}.} \bibinfo{year}{2019}\natexlab{}.
\newblock \showarticletitle{Multi-gcn: Graph convolutional networks for
  multi-view networks, with applications to global poverty}. In
  \bibinfo{booktitle}{\emph{AAAI}}, Vol.~\bibinfo{volume}{33}.
  \bibinfo{pages}{606--613}.
\newblock


\bibitem[\protect\citeauthoryear{Kipf and Welling}{Kipf and Welling}{2016}]%
        {kipf2016semi}
\bibfield{author}{\bibinfo{person}{Thomas~N Kipf} {and} \bibinfo{person}{Max
  Welling}.} \bibinfo{year}{2016}\natexlab{}.
\newblock \showarticletitle{Semi-supervised classification with graph
  convolutional networks}.
\newblock \bibinfo{journal}{\emph{arXiv preprint arXiv:1609.02907}}
  (\bibinfo{year}{2016}).
\newblock


\bibitem[\protect\citeauthoryear{Klicpera, Wei{\ss}enberger, and
  G{\"u}nnemann}{Klicpera et~al\mbox{.}}{2019}]%
        {klicpera2019diffusion}
\bibfield{author}{\bibinfo{person}{Johannes Klicpera}, \bibinfo{person}{Stefan
  Wei{\ss}enberger}, {and} \bibinfo{person}{Stephan G{\"u}nnemann}.}
  \bibinfo{year}{2019}\natexlab{}.
\newblock \showarticletitle{Diffusion improves graph learning}.
\newblock \bibinfo{journal}{\emph{NeurIPS}} (\bibinfo{year}{2019}).
\newblock


\bibitem[\protect\citeauthoryear{Li, Jing, and Tong}{Li et~al\mbox{.}}{2022}]%
        {li2022graph}
\bibfield{author}{\bibinfo{person}{Bolian Li}, \bibinfo{person}{Baoyu Jing},
  {and} \bibinfo{person}{Hanghang Tong}.} \bibinfo{year}{2022}\natexlab{}.
\newblock \showarticletitle{Graph Communal Contrastive Learning}. In
  \bibinfo{booktitle}{\emph{Proceedings of the ACM Web Conference 2022}}.
  \bibinfo{pages}{1203--1213}.
\newblock


\bibitem[\protect\citeauthoryear{Li, Chen, Tong, and Liu}{Li
  et~al\mbox{.}}{2018}]%
        {li2018multi}
\bibfield{author}{\bibinfo{person}{Jundong Li}, \bibinfo{person}{Chen Chen},
  \bibinfo{person}{Hanghang Tong}, {and} \bibinfo{person}{Huan Liu}.}
  \bibinfo{year}{2018}\natexlab{}.
\newblock \showarticletitle{Multi-layered network embedding}. In
  \bibinfo{booktitle}{\emph{SDM}}. \bibinfo{pages}{684--692}.
\newblock


\bibitem[\protect\citeauthoryear{Li, Zhou, Xiong, and Hoi}{Li
  et~al\mbox{.}}{2021}]%
        {li2020prototypical}
\bibfield{author}{\bibinfo{person}{Junnan Li}, \bibinfo{person}{Pan Zhou},
  \bibinfo{person}{Caiming Xiong}, {and} \bibinfo{person}{Steven~CH Hoi}.}
  \bibinfo{year}{2021}\natexlab{}.
\newblock \showarticletitle{Prototypical contrastive learning of unsupervised
  representations}.
\newblock \bibinfo{journal}{\emph{ICLR}} (\bibinfo{year}{2021}).
\newblock


\bibitem[\protect\citeauthoryear{Lin and Kang}{Lin and Kang}{2021}]%
        {lin2021graph}
\bibfield{author}{\bibinfo{person}{Zhiping Lin} {and} \bibinfo{person}{Zhao
  Kang}.} \bibinfo{year}{2021}\natexlab{}.
\newblock \showarticletitle{Graph filter-based multi-view attributed graph
  clustering}. In \bibinfo{booktitle}{\emph{IJCAI}}. \bibinfo{pages}{19--26}.
\newblock


\bibitem[\protect\citeauthoryear{Liu, Pan, Jin, Zhou, Xia, and Yu}{Liu
  et~al\mbox{.}}{2021}]%
        {liu2021graph}
\bibfield{author}{\bibinfo{person}{Yixin Liu}, \bibinfo{person}{Shirui Pan},
  \bibinfo{person}{Ming Jin}, \bibinfo{person}{Chuan Zhou},
  \bibinfo{person}{Feng Xia}, {and} \bibinfo{person}{Philip~S Yu}.}
  \bibinfo{year}{2021}\natexlab{}.
\newblock \showarticletitle{Graph self-supervised learning: A survey}.
\newblock \bibinfo{journal}{\emph{arXiv preprint arXiv:2103.00111}}
  (\bibinfo{year}{2021}).
\newblock


\bibitem[\protect\citeauthoryear{Ma, Ren, Jiang, Tang, and Yin}{Ma
  et~al\mbox{.}}{2018}]%
        {ma2018multi}
\bibfield{author}{\bibinfo{person}{Yao Ma}, \bibinfo{person}{Zhaochun Ren},
  \bibinfo{person}{Ziheng Jiang}, \bibinfo{person}{Jiliang Tang}, {and}
  \bibinfo{person}{Dawei Yin}.} \bibinfo{year}{2018}\natexlab{}.
\newblock \showarticletitle{Multi-dimensional network embedding with
  hierarchical structure}. In \bibinfo{booktitle}{\emph{WSDM}}.
  \bibinfo{pages}{387--395}.
\newblock


\bibitem[\protect\citeauthoryear{Ma, Wang, Aggarwal, Yin, and Tang}{Ma
  et~al\mbox{.}}{2019}]%
        {ma2019multi}
\bibfield{author}{\bibinfo{person}{Yao Ma}, \bibinfo{person}{Suhang Wang},
  \bibinfo{person}{Chara~C Aggarwal}, \bibinfo{person}{Dawei Yin}, {and}
  \bibinfo{person}{Jiliang Tang}.} \bibinfo{year}{2019}\natexlab{}.
\newblock \showarticletitle{Multi-dimensional graph convolutional networks}. In
  \bibinfo{booktitle}{\emph{SDM}}. \bibinfo{pages}{657--665}.
\newblock


\bibitem[\protect\citeauthoryear{Maaten and Hinton}{Maaten and Hinton}{2008}]%
        {maaten2008visualizing}
\bibfield{author}{\bibinfo{person}{Laurens van~der Maaten} {and}
  \bibinfo{person}{Geoffrey Hinton}.} \bibinfo{year}{2008}\natexlab{}.
\newblock \showarticletitle{Visualizing data using t-SNE}.
\newblock \bibinfo{journal}{\emph{Journal of machine learning research}}
  \bibinfo{volume}{9}, \bibinfo{number}{Nov} (\bibinfo{year}{2008}),
  \bibinfo{pages}{2579--2605}.
\newblock


\bibitem[\protect\citeauthoryear{McAllester and Stratos}{McAllester and
  Stratos}{2020}]%
        {mcallester2020formal}
\bibfield{author}{\bibinfo{person}{David McAllester} {and}
  \bibinfo{person}{Karl Stratos}.} \bibinfo{year}{2020}\natexlab{}.
\newblock \showarticletitle{Formal limitations on the measurement of mutual
  information}. In \bibinfo{booktitle}{\emph{International Conference on
  Artificial Intelligence and Statistics}}. PMLR, \bibinfo{pages}{875--884}.
\newblock


\bibitem[\protect\citeauthoryear{Meng, Liang, Bao, and Zhang}{Meng
  et~al\mbox{.}}{2019}]%
        {meng2019co}
\bibfield{author}{\bibinfo{person}{Zaiqiao Meng}, \bibinfo{person}{Shangsong
  Liang}, \bibinfo{person}{Hongyan Bao}, {and} \bibinfo{person}{Xiangliang
  Zhang}.} \bibinfo{year}{2019}\natexlab{}.
\newblock \showarticletitle{Co-embedding attributed networks}. In
  \bibinfo{booktitle}{\emph{WSDM}}.
\newblock


\bibitem[\protect\citeauthoryear{Mikolov, Sutskever, Chen, Corrado, and
  Dean}{Mikolov et~al\mbox{.}}{2013}]%
        {mikolov2013distributed}
\bibfield{author}{\bibinfo{person}{Tomas Mikolov}, \bibinfo{person}{Ilya
  Sutskever}, \bibinfo{person}{Kai Chen}, \bibinfo{person}{Greg~S Corrado},
  {and} \bibinfo{person}{Jeff Dean}.} \bibinfo{year}{2013}\natexlab{}.
\newblock \showarticletitle{Distributed representations of words and phrases
  and their compositionality}. In \bibinfo{booktitle}{\emph{Advances in neural
  information processing systems}}. \bibinfo{pages}{3111--3119}.
\newblock


\bibitem[\protect\citeauthoryear{Oord, Li, and Vinyals}{Oord
  et~al\mbox{.}}{2018}]%
        {oord2018representation}
\bibfield{author}{\bibinfo{person}{Aaron van~den Oord}, \bibinfo{person}{Yazhe
  Li}, {and} \bibinfo{person}{Oriol Vinyals}.} \bibinfo{year}{2018}\natexlab{}.
\newblock \showarticletitle{Representation learning with contrastive predictive
  coding}.
\newblock \bibinfo{journal}{\emph{arXiv preprint arXiv:1807.03748}}
  (\bibinfo{year}{2018}).
\newblock


\bibitem[\protect\citeauthoryear{Pan and Kang}{Pan and Kang}{2021}]%
        {pan2021multi}
\bibfield{author}{\bibinfo{person}{Erlin Pan} {and} \bibinfo{person}{Zhao
  Kang}.} \bibinfo{year}{2021}\natexlab{}.
\newblock \showarticletitle{Multi-view Contrastive Graph Clustering}.
\newblock \bibinfo{journal}{\emph{Advances in Neural Information Processing
  Systems}}  \bibinfo{volume}{34} (\bibinfo{year}{2021}).
\newblock


\bibitem[\protect\citeauthoryear{Park, Han, and Yu}{Park
  et~al\mbox{.}}{2020a}]%
        {park2020deep}
\bibfield{author}{\bibinfo{person}{Chanyoung Park}, \bibinfo{person}{Jiawei
  Han}, {and} \bibinfo{person}{Hwanjo Yu}.} \bibinfo{year}{2020}\natexlab{a}.
\newblock \showarticletitle{Deep multiplex graph infomax: Attentive multiplex
  network embedding using global information}.
\newblock \bibinfo{journal}{\emph{Knowledge-Based Systems}}
  \bibinfo{volume}{197} (\bibinfo{year}{2020}), \bibinfo{pages}{105861}.
\newblock


\bibitem[\protect\citeauthoryear{Park, Kim, Han, and Yu}{Park
  et~al\mbox{.}}{2020b}]%
        {park2020unsupervised}
\bibfield{author}{\bibinfo{person}{Chanyoung Park}, \bibinfo{person}{Donghyun
  Kim}, \bibinfo{person}{Jiawei Han}, {and} \bibinfo{person}{Hwanjo Yu}.}
  \bibinfo{year}{2020}\natexlab{b}.
\newblock \showarticletitle{Unsupervised Attributed Multiplex Network
  Embedding}. In \bibinfo{booktitle}{\emph{AAAI}}. \bibinfo{pages}{5371--5378}.
\newblock


\bibitem[\protect\citeauthoryear{Paszke, Gross, Massa, Lerer, Bradbury, Chanan,
  Killeen, Lin, Gimelshein, Antiga, et~al\mbox{.}}{Paszke
  et~al\mbox{.}}{2019}]%
        {paszke2019pytorch}
\bibfield{author}{\bibinfo{person}{Adam Paszke}, \bibinfo{person}{Sam Gross},
  \bibinfo{person}{Francisco Massa}, \bibinfo{person}{Adam Lerer},
  \bibinfo{person}{James Bradbury}, \bibinfo{person}{Gregory Chanan},
  \bibinfo{person}{Trevor Killeen}, \bibinfo{person}{Zeming Lin},
  \bibinfo{person}{Natalia Gimelshein}, \bibinfo{person}{Luca Antiga},
  {et~al\mbox{.}}} \bibinfo{year}{2019}\natexlab{}.
\newblock \showarticletitle{Pytorch: An imperative style, high-performance deep
  learning library}.
\newblock \bibinfo{journal}{\emph{Advances in neural information processing
  systems}}  \bibinfo{volume}{32} (\bibinfo{year}{2019}),
  \bibinfo{pages}{8026--8037}.
\newblock


\bibitem[\protect\citeauthoryear{Peng, Huang, Luo, Zheng, Rong, Xu, and
  Huang}{Peng et~al\mbox{.}}{2020}]%
        {peng2020graph}
\bibfield{author}{\bibinfo{person}{Zhen Peng}, \bibinfo{person}{Wenbing Huang},
  \bibinfo{person}{Minnan Luo}, \bibinfo{person}{Qinghua Zheng},
  \bibinfo{person}{Yu Rong}, \bibinfo{person}{Tingyang Xu}, {and}
  \bibinfo{person}{Junzhou Huang}.} \bibinfo{year}{2020}\natexlab{}.
\newblock \showarticletitle{Graph Representation Learning via Graphical Mutual
  Information Maximization}. In \bibinfo{booktitle}{\emph{Proceedings of The
  Web Conference 2020}}.
\newblock


\bibitem[\protect\citeauthoryear{Perozzi, Al-Rfou, and Skiena}{Perozzi
  et~al\mbox{.}}{2014}]%
        {perozzi2014deepwalk}
\bibfield{author}{\bibinfo{person}{Bryan Perozzi}, \bibinfo{person}{Rami
  Al-Rfou}, {and} \bibinfo{person}{Steven Skiena}.}
  \bibinfo{year}{2014}\natexlab{}.
\newblock \showarticletitle{Deepwalk: Online learning of social
  representations}. In \bibinfo{booktitle}{\emph{Proceedings of the 20th ACM
  SIGKDD}}. \bibinfo{pages}{701--710}.
\newblock


\bibitem[\protect\citeauthoryear{Poole, Ozair, Van Den~Oord, Alemi, and
  Tucker}{Poole et~al\mbox{.}}{2019}]%
        {poole2019variational}
\bibfield{author}{\bibinfo{person}{Ben Poole}, \bibinfo{person}{Sherjil Ozair},
  \bibinfo{person}{Aaron Van Den~Oord}, \bibinfo{person}{Alex Alemi}, {and}
  \bibinfo{person}{George Tucker}.} \bibinfo{year}{2019}\natexlab{}.
\newblock \showarticletitle{On variational bounds of mutual information}. In
  \bibinfo{booktitle}{\emph{ICML}}.
\newblock


\bibitem[\protect\citeauthoryear{Qin, Kim, and Gedeon}{Qin
  et~al\mbox{.}}{2019}]%
        {qin2019rethinking}
\bibfield{author}{\bibinfo{person}{Zhenyue Qin}, \bibinfo{person}{Dongwoo Kim},
  {and} \bibinfo{person}{Tom Gedeon}.} \bibinfo{year}{2019}\natexlab{}.
\newblock \showarticletitle{Rethinking softmax with cross-entropy: Neural
  network classifier as mutual information estimator}.
\newblock \bibinfo{journal}{\emph{arXiv preprint arXiv:1911.10688}}
  (\bibinfo{year}{2019}).
\newblock


\bibitem[\protect\citeauthoryear{Qiu, Chen, Dong, Zhang, Yang, Ding, Wang, and
  Tang}{Qiu et~al\mbox{.}}{2020}]%
        {qiu2020gcc}
\bibfield{author}{\bibinfo{person}{Jiezhong Qiu}, \bibinfo{person}{Qibin Chen},
  \bibinfo{person}{Yuxiao Dong}, \bibinfo{person}{Jing Zhang},
  \bibinfo{person}{Hongxia Yang}, \bibinfo{person}{Ming Ding},
  \bibinfo{person}{Kuansan Wang}, {and} \bibinfo{person}{Jie Tang}.}
  \bibinfo{year}{2020}\natexlab{}.
\newblock \showarticletitle{Gcc: Graph contrastive coding for graph neural
  network pre-training}. In \bibinfo{booktitle}{\emph{Proceedings of the 26th
  ACM SIGKDD International Conference on Knowledge Discovery \& Data Mining}}.
  \bibinfo{pages}{1150--1160}.
\newblock


\bibitem[\protect\citeauthoryear{Qu, Tang, Shang, Ren, Zhang, and Han}{Qu
  et~al\mbox{.}}{2017}]%
        {qu2017attention}
\bibfield{author}{\bibinfo{person}{Meng Qu}, \bibinfo{person}{Jian Tang},
  \bibinfo{person}{Jingbo Shang}, \bibinfo{person}{Xiang Ren},
  \bibinfo{person}{Ming Zhang}, {and} \bibinfo{person}{Jiawei Han}.}
  \bibinfo{year}{2017}\natexlab{}.
\newblock \showarticletitle{An attention-based collaboration framework for
  multi-view network representation learning}. In
  \bibinfo{booktitle}{\emph{CIKM}}.
\newblock


\bibitem[\protect\citeauthoryear{Shi, Han, He, He, Yang, Luo, and Han}{Shi
  et~al\mbox{.}}{2018}]%
        {shi2018mvn2vec}
\bibfield{author}{\bibinfo{person}{Yu Shi}, \bibinfo{person}{Fangqiu Han},
  \bibinfo{person}{Xinwei He}, \bibinfo{person}{Xinran He},
  \bibinfo{person}{Carl Yang}, \bibinfo{person}{Jie Luo}, {and}
  \bibinfo{person}{Jiawei Han}.} \bibinfo{year}{2018}\natexlab{}.
\newblock \showarticletitle{mvn2vec: Preservation and collaboration in
  multi-view network embedding}.
\newblock \bibinfo{journal}{\emph{arXiv preprint arXiv:1801.06597}}
  (\bibinfo{year}{2018}).
\newblock


\bibitem[\protect\citeauthoryear{Srivastava, Hinton, Krizhevsky, Sutskever, and
  Salakhutdinov}{Srivastava et~al\mbox{.}}{2014}]%
        {srivastava2014dropout}
\bibfield{author}{\bibinfo{person}{Nitish Srivastava},
  \bibinfo{person}{Geoffrey Hinton}, \bibinfo{person}{Alex Krizhevsky},
  \bibinfo{person}{Ilya Sutskever}, {and} \bibinfo{person}{Ruslan
  Salakhutdinov}.} \bibinfo{year}{2014}\natexlab{}.
\newblock \showarticletitle{Dropout: a simple way to prevent neural networks
  from overfitting}.
\newblock \bibinfo{journal}{\emph{The journal of machine learning research}}
  \bibinfo{volume}{15}, \bibinfo{number}{1} (\bibinfo{year}{2014}),
  \bibinfo{pages}{1929--1958}.
\newblock


\bibitem[\protect\citeauthoryear{Sun, Lin, and Zhu}{Sun et~al\mbox{.}}{2020}]%
        {sun2020multi}
\bibfield{author}{\bibinfo{person}{Ke Sun}, \bibinfo{person}{Zhouchen Lin},
  {and} \bibinfo{person}{Zhanxing Zhu}.} \bibinfo{year}{2020}\natexlab{}.
\newblock \showarticletitle{Multi-stage self-supervised learning for graph
  convolutional networks on graphs with few labeled nodes}. In
  \bibinfo{booktitle}{\emph{AAAI}}, Vol.~\bibinfo{volume}{34}.
  \bibinfo{pages}{5892--5899}.
\newblock


\bibitem[\protect\citeauthoryear{Tang, Qu, Wang, Zhang, Yan, and Mei}{Tang
  et~al\mbox{.}}{2015}]%
        {tang2015line}
\bibfield{author}{\bibinfo{person}{Jian Tang}, \bibinfo{person}{Meng Qu},
  \bibinfo{person}{Mingzhe Wang}, \bibinfo{person}{Ming Zhang},
  \bibinfo{person}{Jun Yan}, {and} \bibinfo{person}{Qiaozhu Mei}.}
  \bibinfo{year}{2015}\natexlab{}.
\newblock \showarticletitle{Line: Large-scale information network embedding}.
  In \bibinfo{booktitle}{\emph{Proceedings of the 24th international conference
  on world wide web}}. \bibinfo{pages}{1067--1077}.
\newblock


\bibitem[\protect\citeauthoryear{Tian, Gao, Cui, Chen, and Liu}{Tian
  et~al\mbox{.}}{2014}]%
        {tian2014learning}
\bibfield{author}{\bibinfo{person}{Fei Tian}, \bibinfo{person}{Bin Gao},
  \bibinfo{person}{Qing Cui}, \bibinfo{person}{Enhong Chen}, {and}
  \bibinfo{person}{Tie-Yan Liu}.} \bibinfo{year}{2014}\natexlab{}.
\newblock \showarticletitle{Learning deep representations for graph
  clustering}. In \bibinfo{booktitle}{\emph{AAAI}}, Vol.~\bibinfo{volume}{28}.
\newblock


\bibitem[\protect\citeauthoryear{Veli{\v{c}}kovi{\'c}, Cucurull, Casanova,
  Romero, Lio, and Bengio}{Veli{\v{c}}kovi{\'c} et~al\mbox{.}}{2018}]%
        {velivckovic2017graph}
\bibfield{author}{\bibinfo{person}{Petar Veli{\v{c}}kovi{\'c}},
  \bibinfo{person}{Guillem Cucurull}, \bibinfo{person}{Arantxa Casanova},
  \bibinfo{person}{Adriana Romero}, \bibinfo{person}{Pietro Lio}, {and}
  \bibinfo{person}{Yoshua Bengio}.} \bibinfo{year}{2018}\natexlab{}.
\newblock \showarticletitle{Graph attention networks}.
\newblock \bibinfo{journal}{\emph{ICLR}} (\bibinfo{year}{2018}).
\newblock


\bibitem[\protect\citeauthoryear{Veli{\v{c}}kovi{\'c}, Fedus, Hamilton,
  Li{\`o}, Bengio, and Hjelm}{Veli{\v{c}}kovi{\'c} et~al\mbox{.}}{2019}]%
        {velivckovic2018deep}
\bibfield{author}{\bibinfo{person}{Petar Veli{\v{c}}kovi{\'c}},
  \bibinfo{person}{William Fedus}, \bibinfo{person}{William~L Hamilton},
  \bibinfo{person}{Pietro Li{\`o}}, \bibinfo{person}{Yoshua Bengio}, {and}
  \bibinfo{person}{R~Devon Hjelm}.} \bibinfo{year}{2019}\natexlab{}.
\newblock \showarticletitle{Deep graph infomax}.
\newblock \bibinfo{journal}{\emph{ICLR}} (\bibinfo{year}{2019}).
\newblock


\bibitem[\protect\citeauthoryear{Wang, Pan, Hu, Long, Jiang, and Zhang}{Wang
  et~al\mbox{.}}{2019b}]%
        {wang2019attributed}
\bibfield{author}{\bibinfo{person}{Chun Wang}, \bibinfo{person}{Shirui Pan},
  \bibinfo{person}{Ruiqi Hu}, \bibinfo{person}{Guodong Long},
  \bibinfo{person}{Jing Jiang}, {and} \bibinfo{person}{Chengqi Zhang}.}
  \bibinfo{year}{2019}\natexlab{b}.
\newblock \showarticletitle{Attributed graph clustering: A deep attentional
  embedding approach}.
\newblock \bibinfo{journal}{\emph{arXiv preprint arXiv:1906.06532}}
  (\bibinfo{year}{2019}).
\newblock


\bibitem[\protect\citeauthoryear{Wang, Pan, Long, Zhu, and Jiang}{Wang
  et~al\mbox{.}}{2017}]%
        {wang2017mgae}
\bibfield{author}{\bibinfo{person}{Chun Wang}, \bibinfo{person}{Shirui Pan},
  \bibinfo{person}{Guodong Long}, \bibinfo{person}{Xingquan Zhu}, {and}
  \bibinfo{person}{Jing Jiang}.} \bibinfo{year}{2017}\natexlab{}.
\newblock \showarticletitle{Mgae: Marginalized graph autoencoder for graph
  clustering}. In \bibinfo{booktitle}{\emph{CIKM}}. \bibinfo{pages}{889--898}.
\newblock


\bibitem[\protect\citeauthoryear{Wang, Cui, and Zhu}{Wang
  et~al\mbox{.}}{2016}]%
        {wang2016structural}
\bibfield{author}{\bibinfo{person}{Daixin Wang}, \bibinfo{person}{Peng Cui},
  {and} \bibinfo{person}{Wenwu Zhu}.} \bibinfo{year}{2016}\natexlab{}.
\newblock \showarticletitle{Structural deep network embedding}. In
  \bibinfo{booktitle}{\emph{Proceedings of the 22nd ACM SIGKDD international
  conference on Knowledge discovery and data mining}}.
  \bibinfo{pages}{1225--1234}.
\newblock


\bibitem[\protect\citeauthoryear{Wang, Jamnik, and Lio}{Wang
  et~al\mbox{.}}{2020}]%
        {Wang2020Abstract}
\bibfield{author}{\bibinfo{person}{Duo Wang}, \bibinfo{person}{Mateja Jamnik},
  {and} \bibinfo{person}{Pietro Lio}.} \bibinfo{year}{2020}\natexlab{}.
\newblock \showarticletitle{Abstract Diagrammatic Reasoning with Multiplex
  Graph Networks}. In \bibinfo{booktitle}{\emph{ICLR}}.
\newblock


\bibitem[\protect\citeauthoryear{Wang, Ji, Shi, Wang, Ye, Cui, and Yu}{Wang
  et~al\mbox{.}}{2019a}]%
        {wang2019heterogeneous}
\bibfield{author}{\bibinfo{person}{Xiao Wang}, \bibinfo{person}{Houye Ji},
  \bibinfo{person}{Chuan Shi}, \bibinfo{person}{Bai Wang},
  \bibinfo{person}{Yanfang Ye}, \bibinfo{person}{Peng Cui}, {and}
  \bibinfo{person}{Philip~S Yu}.} \bibinfo{year}{2019}\natexlab{a}.
\newblock \showarticletitle{Heterogeneous graph attention network}. In
  \bibinfo{booktitle}{\emph{TheWebConf}}.
\newblock


\bibitem[\protect\citeauthoryear{Wang, Liu, Han, and Shi}{Wang
  et~al\mbox{.}}{2021}]%
        {wang2021self}
\bibfield{author}{\bibinfo{person}{Xiao Wang}, \bibinfo{person}{Nian Liu},
  \bibinfo{person}{Hui Han}, {and} \bibinfo{person}{Chuan Shi}.}
  \bibinfo{year}{2021}\natexlab{}.
\newblock \showarticletitle{Self-supervised heterogeneous graph neural network
  with co-contrastive learning}. In \bibinfo{booktitle}{\emph{Proceedings of
  the 27th ACM SIGKDD Conference on Knowledge Discovery \& Data Mining}}.
  \bibinfo{pages}{1726--1736}.
\newblock


\bibitem[\protect\citeauthoryear{Wu, Lin, Gao, Tan, Li, et~al\mbox{.}}{Wu
  et~al\mbox{.}}{2021}]%
        {wu2021self}
\bibfield{author}{\bibinfo{person}{Lirong Wu}, \bibinfo{person}{Haitao Lin},
  \bibinfo{person}{Zhangyang Gao}, \bibinfo{person}{Cheng Tan},
  \bibinfo{person}{Stan Li}, {et~al\mbox{.}}} \bibinfo{year}{2021}\natexlab{}.
\newblock \showarticletitle{Self-supervised on Graphs: Contrastive, Generative,
  or Predictive}.
\newblock \bibinfo{journal}{\emph{arXiv preprint arXiv:2105.07342}}
  (\bibinfo{year}{2021}).
\newblock


\bibitem[\protect\citeauthoryear{Xia, Gao, Yang, and Gao}{Xia
  et~al\mbox{.}}{2021}]%
        {xia2021self}
\bibfield{author}{\bibinfo{person}{Wei Xia}, \bibinfo{person}{Quanxue Gao},
  \bibinfo{person}{Ming Yang}, {and} \bibinfo{person}{Xinbo Gao}.}
  \bibinfo{year}{2021}\natexlab{}.
\newblock \showarticletitle{Self-supervised Contrastive Attributed Graph
  Clustering}.
\newblock \bibinfo{journal}{\emph{NeurIPS}} (\bibinfo{year}{2021}).
\newblock


\bibitem[\protect\citeauthoryear{Xie, Girshick, and Farhadi}{Xie
  et~al\mbox{.}}{2016}]%
        {xie2016unsupervised}
\bibfield{author}{\bibinfo{person}{Junyuan Xie}, \bibinfo{person}{Ross
  Girshick}, {and} \bibinfo{person}{Ali Farhadi}.}
  \bibinfo{year}{2016}\natexlab{}.
\newblock \showarticletitle{Unsupervised deep embedding for clustering
  analysis}. In \bibinfo{booktitle}{\emph{ICML}}.
\newblock


\bibitem[\protect\citeauthoryear{Xiong, Yan, and Pan}{Xiong
  et~al\mbox{.}}{2021}]%
        {xiong2021contrastive}
\bibfield{author}{\bibinfo{person}{Hao Xiong}, \bibinfo{person}{Junchi Yan},
  {and} \bibinfo{person}{Li Pan}.} \bibinfo{year}{2021}\natexlab{}.
\newblock \showarticletitle{Contrastive Multi-View Multiplex Network Embedding
  with Applications to Robust Network Alignment}. In
  \bibinfo{booktitle}{\emph{Proceedings of the 27th ACM SIGKDD Conference on
  Knowledge Discovery \& Data Mining}}. \bibinfo{pages}{1913--1923}.
\newblock


\bibitem[\protect\citeauthoryear{Xue, Yang, Rajan, Jiang, Wei, and Lin}{Xue
  et~al\mbox{.}}{2021}]%
        {xue2021multiplex}
\bibfield{author}{\bibinfo{person}{Hansheng Xue}, \bibinfo{person}{Luwei Yang},
  \bibinfo{person}{Vaibhav Rajan}, \bibinfo{person}{Wen Jiang},
  \bibinfo{person}{Yi Wei}, {and} \bibinfo{person}{Yu Lin}.}
  \bibinfo{year}{2021}\natexlab{}.
\newblock \showarticletitle{Multiplex Bipartite Network Embedding using Dual
  Hypergraph Convolutional Networks}. In \bibinfo{booktitle}{\emph{Proceedings
  of the Web Conference 2021}}. \bibinfo{pages}{1649--1660}.
\newblock


\bibitem[\protect\citeauthoryear{Yan, Liu, Ban, Jing, and Tong}{Yan
  et~al\mbox{.}}{2021a}]%
        {yan2021dynamic}
\bibfield{author}{\bibinfo{person}{Yuchen Yan}, \bibinfo{person}{Lihui Liu},
  \bibinfo{person}{Yikun Ban}, \bibinfo{person}{Baoyu Jing}, {and}
  \bibinfo{person}{Hanghang Tong}.} \bibinfo{year}{2021}\natexlab{a}.
\newblock \showarticletitle{Dynamic Knowledge Alignment}. In
  \bibinfo{booktitle}{\emph{AAAI}}.
\newblock


\bibitem[\protect\citeauthoryear{Yan, Zhang, and Tong}{Yan
  et~al\mbox{.}}{2021b}]%
        {yan2021bright}
\bibfield{author}{\bibinfo{person}{Yuchen Yan}, \bibinfo{person}{Si Zhang},
  {and} \bibinfo{person}{Hanghang Tong}.} \bibinfo{year}{2021}\natexlab{b}.
\newblock \showarticletitle{Bright: A bridging algorithm for network
  alignment}. In \bibinfo{booktitle}{\emph{Proceedings of the Web Conference
  2021}}. \bibinfo{pages}{3907--3917}.
\newblock


\bibitem[\protect\citeauthoryear{You, Chen, Sui, Chen, Wang, and Shen}{You
  et~al\mbox{.}}{2020}]%
        {you2020graph}
\bibfield{author}{\bibinfo{person}{Yuning You}, \bibinfo{person}{Tianlong
  Chen}, \bibinfo{person}{Yongduo Sui}, \bibinfo{person}{Ting Chen},
  \bibinfo{person}{Zhangyang Wang}, {and} \bibinfo{person}{Yang Shen}.}
  \bibinfo{year}{2020}\natexlab{}.
\newblock \showarticletitle{Graph contrastive learning with augmentations}.
\newblock \bibinfo{journal}{\emph{Advances in Neural Information Processing
  Systems}}  \bibinfo{volume}{33} (\bibinfo{year}{2020}),
  \bibinfo{pages}{5812--5823}.
\newblock


\bibitem[\protect\citeauthoryear{Zhang, Qiu, Yi, and Song}{Zhang
  et~al\mbox{.}}{2018a}]%
        {zhang2018scalable}
\bibfield{author}{\bibinfo{person}{Hongming Zhang}, \bibinfo{person}{Liwei
  Qiu}, \bibinfo{person}{Lingling Yi}, {and} \bibinfo{person}{Yangqiu Song}.}
  \bibinfo{year}{2018}\natexlab{a}.
\newblock \showarticletitle{Scalable Multiplex Network Embedding.}. In
  \bibinfo{booktitle}{\emph{IJCAI}}, Vol.~\bibinfo{volume}{18}.
  \bibinfo{pages}{3082--3088}.
\newblock


\bibitem[\protect\citeauthoryear{Zhang, Xiong, Zhang, Zhang, Jiao, and
  Zhu}{Zhang et~al\mbox{.}}{2020}]%
        {zhang2020commdgi}
\bibfield{author}{\bibinfo{person}{Tianqi Zhang}, \bibinfo{person}{Yun Xiong},
  \bibinfo{person}{Jiawei Zhang}, \bibinfo{person}{Yao Zhang},
  \bibinfo{person}{Yizhu Jiao}, {and} \bibinfo{person}{Yangyong Zhu}.}
  \bibinfo{year}{2020}\natexlab{}.
\newblock \showarticletitle{CommDGI: community detection oriented deep graph
  infomax}. In \bibinfo{booktitle}{\emph{Proceedings of the 29th ACM
  International Conference on Information \& Knowledge Management}}.
  \bibinfo{pages}{1843--1852}.
\newblock


\bibitem[\protect\citeauthoryear{Zhang, Liu, Li, and Wu}{Zhang
  et~al\mbox{.}}{2019}]%
        {zhang2019attributed}
\bibfield{author}{\bibinfo{person}{Xiaotong Zhang}, \bibinfo{person}{Han Liu},
  \bibinfo{person}{Qimai Li}, {and} \bibinfo{person}{Xiao-Ming Wu}.}
  \bibinfo{year}{2019}\natexlab{}.
\newblock \showarticletitle{Attributed graph clustering via adaptive graph
  convolution}.
\newblock \bibinfo{journal}{\emph{arXiv preprint arXiv:1906.01210}}
  (\bibinfo{year}{2019}).
\newblock


\bibitem[\protect\citeauthoryear{Zhang, Yang, Bu, Zhou, Yu, Zhang, Ester, and
  Wang}{Zhang et~al\mbox{.}}{2018b}]%
        {zhang2018anrl}
\bibfield{author}{\bibinfo{person}{Zhen Zhang}, \bibinfo{person}{Hongxia Yang},
  \bibinfo{person}{Jiajun Bu}, \bibinfo{person}{Sheng Zhou},
  \bibinfo{person}{Pinggang Yu}, \bibinfo{person}{Jianwei Zhang},
  \bibinfo{person}{Martin Ester}, {and} \bibinfo{person}{Can Wang}.}
  \bibinfo{year}{2018}\natexlab{b}.
\newblock \showarticletitle{ANRL: Attributed Network Representation Learning
  via Deep Neural Networks.}. In \bibinfo{booktitle}{\emph{IJCAI}},
  Vol.~\bibinfo{volume}{18}. \bibinfo{pages}{3155--3161}.
\newblock


\bibitem[\protect\citeauthoryear{Zheng, Fu, and He}{Zheng
  et~al\mbox{.}}{2021a}]%
        {zheng2021tackling}
\bibfield{author}{\bibinfo{person}{Lecheng Zheng}, \bibinfo{person}{Dongqi Fu},
  {and} \bibinfo{person}{Jingrui He}.} \bibinfo{year}{2021}\natexlab{a}.
\newblock \showarticletitle{Tackling oversmoothing of gnns with contrastive
  learning}.
\newblock \bibinfo{journal}{\emph{arXiv preprint arXiv:2110.13798}}
  (\bibinfo{year}{2021}).
\newblock


\bibitem[\protect\citeauthoryear{Zheng, Zhu, He, and Xiong}{Zheng
  et~al\mbox{.}}{2021b}]%
        {zheng2021heterogeneous}
\bibfield{author}{\bibinfo{person}{Lecheng Zheng}, \bibinfo{person}{Yada Zhu},
  \bibinfo{person}{Jingrui He}, {and} \bibinfo{person}{Jinjun Xiong}.}
  \bibinfo{year}{2021}\natexlab{b}.
\newblock \showarticletitle{Heterogeneous Contrastive Learning}.
\newblock \bibinfo{journal}{\emph{arXiv preprint arXiv:2105.09401}}
  (\bibinfo{year}{2021}).
\newblock


\bibitem[\protect\citeauthoryear{Zhou, Zheng, Han, and He}{Zhou
  et~al\mbox{.}}{2020}]%
        {zhou2020data}
\bibfield{author}{\bibinfo{person}{Dawei Zhou}, \bibinfo{person}{Lecheng
  Zheng}, \bibinfo{person}{Jiawei Han}, {and} \bibinfo{person}{Jingrui He}.}
  \bibinfo{year}{2020}\natexlab{}.
\newblock \showarticletitle{A data-driven graph generative model for temporal
  interaction networks}. In \bibinfo{booktitle}{\emph{Proceedings of the 26th
  ACM SIGKDD International Conference on Knowledge Discovery \& Data Mining}}.
  \bibinfo{pages}{401--411}.
\newblock


\bibitem[\protect\citeauthoryear{Zhou, Zheng, Xu, and He}{Zhou
  et~al\mbox{.}}{2019}]%
        {zhou2019misc}
\bibfield{author}{\bibinfo{person}{Dawei Zhou}, \bibinfo{person}{Lecheng
  Zheng}, \bibinfo{person}{Jiejun Xu}, {and} \bibinfo{person}{Jingrui He}.}
  \bibinfo{year}{2019}\natexlab{}.
\newblock \showarticletitle{Misc-GAN: A multi-scale generative model for
  graphs}.
\newblock \bibinfo{journal}{\emph{Frontiers in big Data}}  \bibinfo{volume}{2}
  (\bibinfo{year}{2019}), \bibinfo{pages}{3}.
\newblock


\bibitem[\protect\citeauthoryear{Zhu, Xu, Yu, Liu, Wu, and Wang}{Zhu
  et~al\mbox{.}}{2021}]%
        {zhu2021graph}
\bibfield{author}{\bibinfo{person}{Yanqiao Zhu}, \bibinfo{person}{Yichen Xu},
  \bibinfo{person}{Feng Yu}, \bibinfo{person}{Qiang Liu}, \bibinfo{person}{Shu
  Wu}, {and} \bibinfo{person}{Liang Wang}.} \bibinfo{year}{2021}\natexlab{}.
\newblock \showarticletitle{Graph contrastive learning with adaptive
  augmentation}. In \bibinfo{booktitle}{\emph{Proceedings of the Web Conference
  2021}}. \bibinfo{pages}{2069--2080}.
\newblock


\bibitem[\protect\citeauthoryear{Zhuang and Ma}{Zhuang and Ma}{2018}]%
        {zhuang2018dual}
\bibfield{author}{\bibinfo{person}{Chenyi Zhuang} {and} \bibinfo{person}{Qiang
  Ma}.} \bibinfo{year}{2018}\natexlab{}.
\newblock \showarticletitle{Dual graph convolutional networks for graph-based
  semi-supervised classification}. In \bibinfo{booktitle}{\emph{Proceedings of
  the 2018 World Wide Web Conference}}. \bibinfo{pages}{499--508}.
\newblock


\end{thebibliography}

\end{document}